\documentclass[11pt]{article}

\usepackage[final]{acl}

\usepackage{times}
\usepackage{latexsym}

\usepackage[T1]{fontenc}

\usepackage[utf8]{inputenc}

\usepackage{microtype}

\usepackage{inconsolata}

\usepackage{graphicx}

\usepackage{booktabs}
\usepackage{makecell}
\usepackage{bm}
\usepackage{bbm}
\usepackage{amsmath}
\usepackage{amssymb}
\usepackage{amsfonts}
\usepackage{amsthm}
\usepackage{mleftright}
\usepackage{thmtools}
\usepackage{thm-restate}
\usepackage[safe]{tipa}

\usepackage{graphicx}
\usepackage{todonotes}
\definecolor{mintgreen}{RGB}{152, 255, 152}
\makeatletter
\newcommand*\iftodonotes{\if@todonotes@disabled\expandafter\@secondoftwo\else\expandafter\@firstoftwo\fi}  %
\makeatother

\makeatletter
\newcommand{\appendixtableofcontents}{%
  \section*{Appendix Contents}%
  \@starttoc{atoc}%
}
\newcommand{\startappendixtoc}{%
  \let\oldaddcontentsline\addcontentsline
  \renewcommand{\addcontentsline}[3]{%
    \oldaddcontentsline{##1}{##2}{##3}%
    \ifstrequal{##1}{toc}{\oldaddcontentsline{atoc}{##2}{##3}}{}%
  }%
}
\makeatother

\usepackage{xcolor}

\newcommand{\mymacro}[1]{%
  \ifmmode
    {\textcolor{MacroColor}{#1}}%
  \else
    {\color{MacroColor}#1}%
  \fi
}
\newcommand{\defn}[1]{\textbf{#1}}
\newcommand{\defeq}{\mathrel{\stackrel{\textnormal{\tiny def}}{=}}}

\usepackage{xcolor}%
\newif\ifshowrevision          %
\showrevisiontrue              %

\ifshowrevision
  
\else
  
\fi

\usepackage{mathtools}
\usepackage{ellipsis}
\makeatletter
\renewcommand*{\mathellipsis}{%
  \mathinner{%
    \kern\ellipsisbeforegap%
    {\ldotp}\kern\ellipsisgap%
    {\ldotp}\kern\ellipsisgap%
    {\ldotp}\kern\ellipsisaftergap%
  }%
}
\renewcommand*{\dotsb@}{%
  \mathinner{%
    \kern\ellipsisbeforegap%
    {\cdotp}\kern\ellipsisgap%
    {\cdotp}\kern\ellipsisgap%
    {\cdotp}\kern\ellipsisaftergap%
  }%
}
\renewcommand*{\@cdots}{%
  \mathinner{%
    \kern\ellipsisbeforegap%
    {\cdotp}\kern\ellipsisgap%
    {\cdotp}\kern\ellipsisgap%
    {\cdotp}\kern\ellipsisaftergap%
  }%
}
\renewcommand*{\ellipsis@default}{%
  \ellipsis@before
  \kern\ellipsisbeforegap
  .\kern\ellipsisgap
  .\kern\ellipsisgap
  .\kern\ellipsisgap
  \ellipsis@after\relax}
\renewcommand*{\ellipsis@centered}{%
  \ellipsis@before
  \kern\ellipsisbeforegap
  .\kern\ellipsisgap
  .\kern\ellipsisgap
  .\kern\ellipsisaftergap
  \ellipsis@after\relax}
\AtBeginDocument{%
  \DeclareRobustCommand*{\dots}{%
    \ifmmode\@xp\mdots@\else\@xp\textellipsis\fi}}
\def\ellipsisgap{-.05em}
\def\ellipsisbeforegap{-.05em}
\def\ellipsisaftergap{.05em}
\makeatother

\usepackage[normalem]{ulem} %

\definecolor{ETHBlue}{RGB}{33,92,175}   %
\definecolor{ETHGreen}{RGB}{98,115,19}      %
\definecolor{ETHPurpleDark}{RGB}{140,10,89} %
\definecolor{ETHPurple}{RGB}{163,7,116} %
\definecolor{ETHGray}{RGB}{111,111,111} %
\definecolor{ETHRed}{RGB}{183,53,45}    %
\definecolor{ETHPetrol}{RGB}{0,120,148} %
\definecolor{ETHBronze}{RGB}{142,103,19}    %
\definecolor{ETHOrange}{RGB}{230, 100, 50}

\colorlet{MacroColor}{black}
\colorlet{MACROCOLOR}{MacroColor}

\definecolor{WildStrawberry}{rgb}{1.0, 0.26, 0.64}
\definecolor{Wisteria}{rgb}{0.79, 0.63, 0.86}
\definecolor{Olive}{rgb}{0.5, 0.5, 0.0}
\definecolor{Plum}{rgb}{0.56, 0.27, 0.52}
\definecolor{NavyBlue}{rgb}{0.0, 0.0, 0.5}
\definecolor{OliveGreen}{rgb}{0.33, 0.42, 0.18}
\definecolor{RoyalPurple}{rgb}{0.47, 0.32, 0.66}

\usepackage[capitalize,noabbrev]{cleveref}
\crefname{section}{\S}{\S\S}
\crefname{table}{Tab.}{Tab.}
\crefname{figure}{Fig.}{Figs.}
\crefname{algorithm}{Alg.}{}
\crefname{equation}{Eq.}{Eqs.}
\crefname{appendix}{App.}{Apps.}
\crefname{theorem}{Theorem}{}
\crefname{prop}{Proposition}{}
\crefname{definition}{Def.}{}
\crefname{cor}{Corollary}{}
\crefname{observation}{Observation}{}
\crefname{assumption}{Assumption}{}
\crefname{hyp}{Hyp.}{Hypotheses}
\crefformat{section}{\S#2#1#3}
\crefname{namedtheorem}{Hyp.}{Hypotheses}

\crefformat{equation}{\eqA Eq.~\eqB #2#1#3)}
\newcommand{\eqA}{}
\newcommand{\eqB}{(}

\newcommand{\saveforcameraready}[1]{}
\newcommand{\cutforspace}[1]{}

\newcommand{\justification}[1]{%
    \refstepcounter{equation}%
    \tag{\theequation \textcolor{black!50}{, \footnotesize{#1}}}
}

\usepackage{stmaryrd} %

\newcommand{\set}[1]{\left\{ #1 \right\}}

\newcommand{\indicator}[1]{\mathbbm{1}\mleft\{#1\mright\}}

\newcommand{\precover}{{\mymacro{\mathcal{P}}}}

\newcommand{\prefixbase}{\mymacro{\text{pf}}}

\newcommand{\kleene}[1]{#1^*}

\newcommand{\eos}{{\mymacro{\textsc{eos}}}}

\DeclareTextCommandDefault{\textvisiblespace}{%
  \mbox{\kern.06em\vrule \@height.3ex}%
  \vbox{\hrule \@width.3em}%
  \hbox{\vrule \@height.3ex}}

\newcommand{\X}[0]{{\mymacro{X}}}

\DeclareMathOperator*{\E}{{\mymacro{\mathbbm{E}}}}

\newcommand{\asroracle}{\mymacro{\Omega}}

\definecolor{speechcolor} {HTML}{4477AA}
\definecolor{textcolor}   {HTML}{BB5566}
\definecolor{generalcolor}{HTML}{228833}
\definecolor{gpttokcolor}  {HTML}{CCBB44}

\newcommand{\speechtok}[0]{{\color{speechcolor}\delta}}
\newcommand{\speechtoken}{{\mymacro{\speechtok}}}
\newcommand{\speechtokseq}{\ensuremath{{\mymacro{{\color{speechcolor}\boldsymbol{\delta}}}}}\xspace}
\newcommand{\speechtokAlphabet}{{\mymacro{{\color{speechcolor}\Delta}}}}
\newcommand{\speechtokStrings}[0]{{\mymacro{\kleene{{\color{speechcolor}\Delta}}}}}
 
\newcommand{\texttok}[0]{{\color{textcolor}\sigma}}
\newcommand{\texttoken}{{\mymacro{\texttok}}}
\newcommand{\texttokseq}{\ensuremath{{\mymacro{{\color{textcolor}\boldsymbol{\sigma}}}}}\xspace}
\newcommand{\texttokAlphabet}{{\mymacro{{\color{textcolor}\Sigma}}}}
\newcommand{\texttokStrings}[0]{{\mymacro{\kleene{{\color{textcolor}\Sigma}}}}}
 
\newcommand{\generaltok}[0]{{\color{generalcolor}\gamma}}
\newcommand{\generaltoken}{\mymacro{\generaltok}}
\newcommand{\generaltokseq}{\ensuremath{{\mymacro{{\color{generalcolor}\boldsymbol{\gamma}}}}}\xspace}
\newcommand{\generaltokAlphabet}{{\mymacro{{\color{generalcolor}\Gamma}}}}
\newcommand{\generaltokStrings}[0]{{\mymacro{\kleene{{\color{generalcolor}\Gamma}}}}}
 
\newcommand{\gpttok}[0]{{\color{gpttokcolor}\phi}}
\newcommand{\gpttoken}{\mymacro{\gpttok}}
\newcommand{\gpttokseq}{\ensuremath{{\mymacro{{\color{gpttokcolor}\boldsymbol{\phi}}}}}\xspace}
\newcommand{\gpttokAlphabet}{{\mymacro{{\color{gpttokcolor}\Phi}}}}

\newcommand{\LM}{\mymacro{p}}
\newcommand{\generalLM}{\mymacro{\LM_{{\color{generalcolor}\Gamma}}}}
\newcommand{\speechLM}{\mymacro{\LM_{{\color{speechcolor}\Delta}}}}
\newcommand{\textLM}{\mymacro{\LM_{{\color{textcolor}\Sigma}}}}
\newcommand{\gptLM}{\mymacro{\LM_{{\color{gpttokcolor}\Phi}}}}
\newcommand{\generalLMPref}{\mymacro{\overrightarrow{\generalLM}}}
\newcommand{\speechLMPref}{\mymacro{\overrightarrow{\speechLM}}}
\newcommand{\textLMPref}{\mymacro{\overrightarrow{\textLM}}}
\newcommand{\gptLMPref}{\mymacro{\overrightarrow{\gptLM}}}

\newcommand{\estimator}{\mymacro{\widehat{\textLMPref}}}
\newcommand{\proposal}{\mymacro{q_{\speechtokAlphabet}}}
\newcommand{\prefixProposal}{\mymacro{\overrightarrow{\proposal}}}
\newcommand{\projection}[1]{\mymacro{\pi_{#1}}}

\newcommand{\surp}{\mymacro{\iota}}
\newcommand{\surpgpt}{\mymacro{\surp_{\text{gpt}}}}
\newcommand{\surpest}{\mymacro{\surp_{\text{est}}}}
\newcommand{\surpslm}{\mymacro{\surp_{\text{slm}}}}

\newcommand{\sampsize}{\mymacro{N}}
\newcommand{\subsampsize}{\mymacro{J}}

\newcommand{\var}{\mymacro{\mathbb{V}}}

\title{Estimating Surprisal from the Speech Signal}

\newcommand{\uzurich}{\textnormal{1}}
\newcommand{\ethz}{\textnormal{2}}
\newcommand{\chifro}{\textnormal{3}}
\newcommand{\umin}{\textnormal{4}}
\author{Zachary Hopton\textsuperscript{\uzurich} \quad  Samuel Kiegeland\textsuperscript{\ethz,\,\chifro} \quad Francesco Ignazio Re\textsuperscript{\ethz} \quad Patrick Haller\textsuperscript{\uzurich} \\
  \textbf{
  Matthew B. Winn\textsuperscript{\umin} \quad 
  Ryan Cotterell\textsuperscript{\ethz} \quad
  Eleanor Chodroff\textsuperscript{\uzurich}} \\ \\
  \textsuperscript{\uzurich}University of Zurich~\;~\textsuperscript{\ethz}ETH Zürich~\;~\textsuperscript{\chifro}CHI-FRO~\;~%
  \textsuperscript{\umin}University of Minnesota\\
  \texttt{\{\href{mailto:zacharywilliam.hopton@uzh.ch}{zacharywilliam.hopton}, \href{mailto:eleanor.chodroff@uzh.ch}{eleanor.chodroff}\}}\texttt{@uzh.ch}\\
}

\date{}

\begin{document}
\maketitle
\begin{abstract}
Although language is predominantly spoken, empirical tests of surprisal theory rely almost entirely on written text. In practice, surprisal is estimated using language models (LMs) trained on text and applied to transcriptions, creating a text bottleneck that limits evaluation of the theory in the spoken modality.
We remove this bottleneck by estimating surprisal directly from the speech signal using a speech LM. This raises a key question: should surprisal be defined over realization-specific acoustic inputs or over abstract linguistic units? Since a single unit can have many distinct acoustic realizations, each assigned a different probability by the model, defining surprisal over abstract units is difficult. To address this, we introduce an importance sampling estimator that approximates the probability of an abstract unit by marginalizing over realizations generated by a text-to-speech model.
Using pupillometry data from a listening study, we evaluate surprisal theory in this text-free setting at multiple levels of granularity. Obtaining surprisal from several speech LMs, we consistently find that surprisal estimated via importance sampling better predicts pupil dilation than realization-specific surprisal for words. We also find that surprisal values for sub-lexical units are predictive of pupil dilation. While we find that surprisal can be estimated without recourse to text, and that marginalizing over distinct realizations of words is more predictive of lexical processing effort, surprisal as a whole is not a strong predictor of change in pupil size, regardless of its source.

\end{abstract}

\section{Introduction}

Human language is primarily a spoken phenomenon, yet empirical tests of surprisal theory rely on written text rather than the acoustic signal itself \citep{hale-2001-probabilistic, levy2008expectation, shain2024large}.\footnote{We note that signed languages constitute another major modality of human language production and perception \citep{sandler2006sign}. The challenges we discuss regarding the mismatch between text-based models and non-textual modalities apply equally to sign.}
In studies of reading, this assumption is natural: the input is discretized into written units, and text language models (LMs) can assign probabilities to those units directly. In studies of listening, however, the input is an acoustic signal. 
Applying a text language model to a transcription of the acoustic signal introduces a text bottleneck: the probabilistic model is defined over written forms, while listeners receive and interpret speech. 
This bottleneck is not only an inconvenience; it also obscures a fundamental question for calculating surprisal for a speech signal from a speech LM: what is the unit over which surprisal should be calculated?\looseness=-1

Existing work on spoken-language surprisal has generally followed the text-based approach by applying text LMs to transcriptions \citep{shain_fmri_2020, WallbridgeL022}. This assumes that textual representations are adequate proxies for the units used in listening. 
Even when these transcriptions are in terms of units that are more representative of speech than graphemes (e.g., phonemes or syllables as in \citet{pimentel-etal-2021-surprisal} and \citet{sohoglu2024convergent}), transcription omits prosodic, acoustic, and speaker-specific information present in the speech signal.
Speech LMs remove this bottleneck, but introduce a new problem: their probabilities are defined over speech-code sequences rather than words or phones.
\looseness=-1

We address this problem using pupillometry data from \citet{winn2016rapid}, where participants listened to sentences varying in semantic context. %
As these stimuli have minimal phonetic variation (one speaker, flat affect, read speech prosody), they provide a controlled auditory test case for asking which speech-LM surprisal quantity best predicts processing effort. %
Studying surprisal theory through the lens of speech then raises two major questions:\looseness=-1

\paragraph{Abstract units or specific acoustic realizations?}

Surprisal theory is typically defined over abstract units such as words or phonemes \citep{levy2008expectation, smith2013effect}, whereas speech LMs assign probabilities to acoustic representations.\footnote{We use the term \defn{abstract units} to refer to any given unit that might compose one's intended message when communicating, as opposed to the units that make up noisy observations of said message. See \citet{vigly2025comprehension} for relevant discussion of noisy-channel inference models of language comprehension.}
This discrepancy between the abstract and the specific is known in speech as the lack of invariance problem: a single abstract linguistic unit can have infinitely many acoustic realizations, and 
the same unit may be realized differently depending on its phonetic context \citep{liberman1967perception}, speaker characteristics \citep{peterson1952control}, recording quality, background noise, and conditions of the physical environment (e.g., reverberation, echo, etc.), among other sources of variation.
As a result, the speech LM assigns different probabilities to different realizations of the same abstract unit. 
Aggregating probabilities over the observed utterance therefore yields surprisal for that particular realization, not necessarily for the abstract unit. In the present work, we approximate abstract-unit surprisal by marginalizing over possible realizations with importance sampling \citep{robert2004monte}.\looseness=-1

\paragraph{Which granularity of speech?} Speech is continuous, but speech LMs operate over discrete frames. Surprisal can therefore be computed at the frame level, or aggregated over more conventional units such as phones or orthographic words. We calculate surprisal at each of these granularities to ask which levels of speech LM predictions can model listening effort.\looseness=-1

In our empirical analysis, we investigate the relationship between pupil dilation and speech-LM surprisal estimated at the word, phone, and frame levels using four 
speech LMs. At the word level, we further compare realization-specific surprisal against our marginalized, abstract-unit estimate. We find that marginalized word surprisal better predicts pupil dilation than realization-specific word surprisal, though phone- and frame-level surprisal are also predictive. 
Though surprisal is a weak predictor of pupil response as a whole, our results suggest that in spoken language comprehension, processing effort is better captured by probabilities over abstract lexical units than by probabilities over specific acoustic realizations alone. 
Our main contribution, however, is methodological: we provide a framework for defining the relevant probabilistic event when obtaining surprisal from speech LMs. Applying this framework to diverse speech conditions stands to position generative speech LMs as tools for learning how factors such as speech content, speaker characteristics, and prosody interact to influence speech processing.

\begin{figure}
    \centering
    \includegraphics[width=\linewidth]{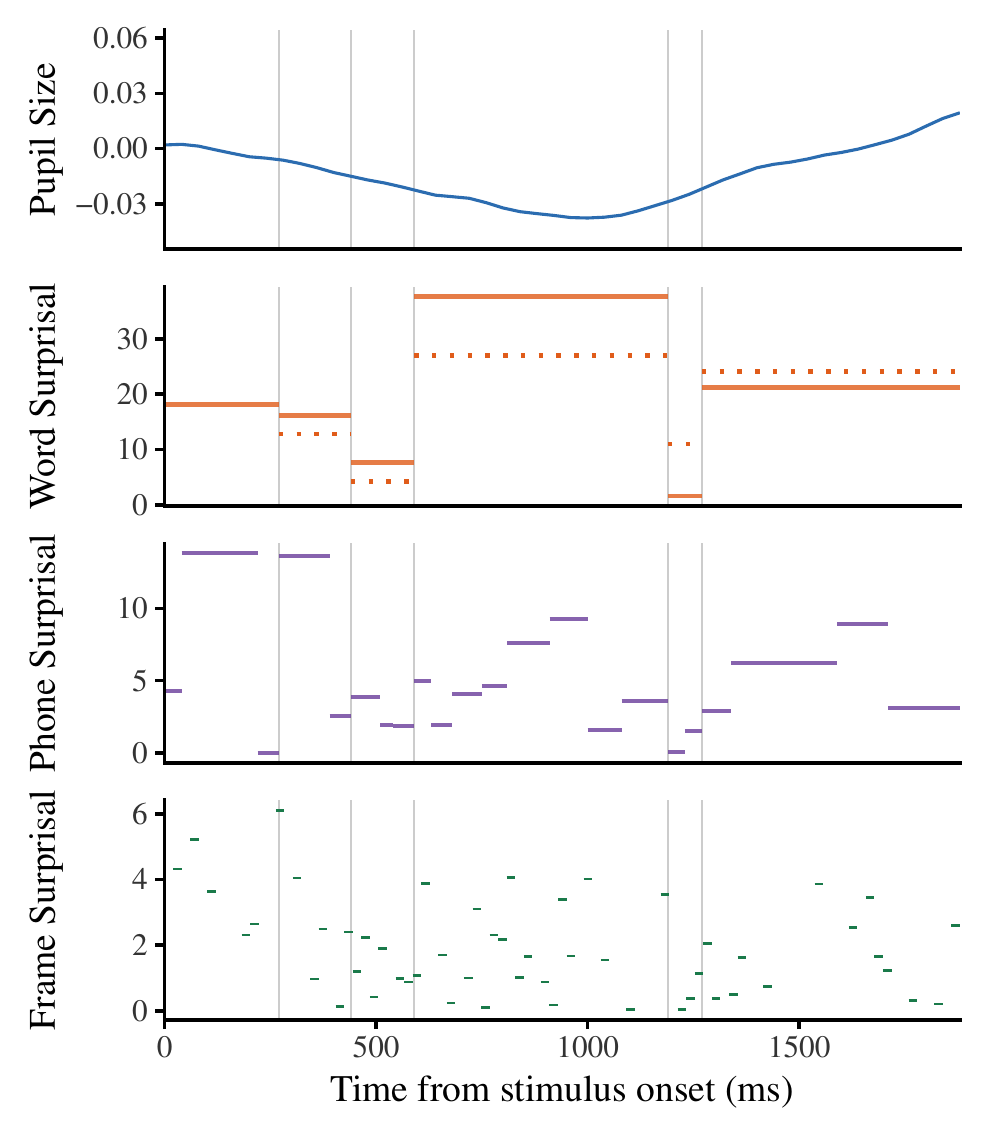}
    \caption{
    Pupil size trajectory and GSLM unit surprisals for participant NH--24 while listening to stimulus 5x39, ``Tom has been discussing the beads.'' Dotted lines represent estimated marginalized surprisals; solid lines represent unmarginalized surprisals for this specific utterance.}
    \label{fig:trial-viz}

\end{figure}

\section{Language Models}

\subsection{Preliminaries}

We denote an alphabet---some finite, non-empty set of units---using uppercase Greek letters, e.g., $\generaltokAlphabet$. We use lowercase Greek letters, e.g., $\generaltoken \in \generaltokAlphabet$, to denote units from an alphabet and bold lowercase Greek letters $\generaltokseq = \generaltoken_1 \cdots \generaltoken_T$ to refer to a string of length $T$ from $\generaltokStrings$. We use the notation $\lvert \generaltokseq \rvert$ to refer to the length of a string. The concatenation of two strings $\generaltokseq$ and $\generaltokseq'$ is denoted as $\generaltokseq\generaltokseq'$. $\generaltokseq \prec \generaltokseq'$ denotes that $\generaltokseq$ is a strict prefix of $\generaltokseq'$, and $\generaltokseq \preceq \generaltokseq'$ denotes that $\generaltokseq$ is a prefix of $\generaltokseq'$. 

A language model $\LM$ is a probability distribution over some alphabet. We denote the specific alphabet over which a language model is defined using subscripts, e.g., $\generalLM$, and extend the alphabet with a special end-of-sequence symbol $\eos \notin \generaltokAlphabet$. We use the random variable $\X$ that takes values of strings $\generaltokseq \in \generaltokStrings$ according to the distribution specified by the LM. 
The LM's \emph{prefix probability} is defined as
\begin{equation}
    \generalLMPref\left(\generaltokseq\right) 
    \;\defeq\; 
    \sum_{\generaltokseq' \in \generaltokStrings} \generalLM\left(\generaltokseq \generaltokseq'\right),
\end{equation}
the total probability mass of all sequences that begin with the prefix $\generaltokseq$.  
The conditional prefix probability is then
\begin{equation}
    \generalLMPref\left(\generaltokseq' \mid \generaltokseq\right)
    \;\defeq\;
    \frac{\generalLMPref\left(\generaltokseq \generaltokseq'\right)}{\generalLMPref\left(\generaltokseq\right)},
\end{equation}
which defines a valid probability distribution over $\generaltokAlphabet \sqcup \!\left\{\eos\right\}$ for all $\generaltokseq \in \generaltokStrings$.
This structure yields the standard left-to-right factorization of an autoregressive language model:  
\begin{equation}
\label{lr-factorization}
    \generalLM\left(\generaltokseq\right) 
    \;=\;
    \generalLMPref\left(\eos \mid \generaltokseq\right)
    \prod_{t=1}^{\lvert\generaltokseq\rvert} 
    \generalLMPref\left(\generaltoken_t \mid \generaltokseq_{<t}\right).
\end{equation}

\subsection{Speech Language Models}

Whereas text contains natural boundaries between units (e.g., spaces and punctuation in many languages' scripts), speech is carried by a continuous acoustic signal without natural boundaries. 
Digitization of the speech signal yields a dense sequence of bit-valued amplitude measurements (e.g., 16,000 samples per second).
Although these measurements are discrete in a technical sense, treating them as independent units for language modeling is poorly aligned with linguistic structure, as strong dependencies between adjacent samples reflect low-level signal properties rather than meaningful linguistic variation.
In practice, speech LMs therefore operate over higher-level representations obtained by discretizing the signal using feature extractors or neural codecs \citep{arora2025landscape}.
An input acoustic signal is first passed through a feature extractor, yielding a shorter sequence of representations. These representations can then be quantized into discrete units from a finite set $\speechtokAlphabet$. 
As described by \citet{lakhotia-etal-2021-generative}, strings of these units from $\speechtokAlphabet$ can be used to train an autoregressive language model in the same way as strings of text tokens. 

Specifically, we define $\speechtokseq$ as a string from $\speechtokStrings$. By defining a speech LM $\speechLM$ as a probability distribution over $\speechtokStrings$ extended with an end-of-sequence symbol $\eos \notin \speechtokAlphabet$, we obtain a left-to-right factorization of an autoregressive language model analogous to that in \cref{lr-factorization}:
\begin{equation}
\label{speech-lr-factorization}
    \speechLM(\speechtokseq) 
    =
    \speechLMPref\left(\eos \mid \speechtokseq\right)
    \prod_{t=1}^{\lvert\speechtokseq\rvert} 
    \speechLMPref\left(\speechtoken_t \mid \speechtokseq_{<t}\right).
\end{equation}

\subsection{Converting Speech LMs to Text LMs}

Text can be straightforwardly modeled by defining $\texttokAlphabet$ as an alphabet of units such as orthographic words. Prior work has studied the question of alphabet transfer in LMs, for example, to obtain a character interface from a token-level model \citep{vieira2025from, snbjarnarson2026transducing}. 
Similarly, we seek to model sequences of units such as orthographic words from a speech LM defined over a different alphabet, i.e., an alphabet of speech tokens.
This requires a function $\asroracle \colon \speechtokStrings\rightarrow\texttokStrings$ that decodes speech strings
to orthographic words. 
Intuitively, it could be conceptualized as a human listener or automatic speech recognition (ASR) model transcribing an utterance $\speechtokseq$. Specifically, we assume that $\asroracle$ is a \emph{prefix monotone} mapping---extending the input speech (e.g., adding a new speech token to the input sequence) can only leave the output transcription unchanged or cause it to grow; the transcription of any prior prefix is never revised. Though this is a necessary assumption for our approach to obtaining probabilities of abstract units from a speech LM (see \cref{sec:ap_is}), there are limits to the extent it realistically applies to speech processing. For example, given the input utterance ``/\textipa{aIskrim}/'', $\asroracle$ is required to commit to transcribing either ``I scream'' or ``Ice cream'', though the two could only be disambiguated with further context.

Using the mapping function $\asroracle$, words are generated by the speech LM according to the following distribution:
\begin{equation}
    \label{beta_dist}
    \textLM\left(\texttokseq\right) = \mathbb{P}_{\X\sim \speechLM}\!\left[\asroracle\left(\X\right)=\texttokseq\right].
\end{equation}

\noindent The probability of some orthographic string $\texttokseq$ is then calculated as
\begin{equation}
    \label{eq:preim_marg}
    \textLM\left(\texttokseq\right) =
    \sum_{\speechtokseq \in \precover(\texttokseq)} \indicator{\asroracle\left(\speechtokseq\right) = \texttokseq} \speechLM\left(\speechtokseq\right),
\end{equation}

\noindent
where the precover
$\precover\left(\texttokseq\right) \defeq \set{\speechtokseq \in \speechtokStrings \colon \asroracle\left(\speechtokseq\right) \succeq \texttokseq}$ is the set of all speech strings whose transcription under $\asroracle$ has $\texttokseq$ as a prefix \citep{vieira2025from}. 
Calculating \cref{eq:preim_marg} is not feasible given that we cannot readily enumerate all possible $\speechtokseq$. This being the case, we derive an estimator to approximate prefix probabilities of our units of interest (orthographic words) from a speech LM in \cref{sec:ap_is}.

\section{Estimating Unit Surprisal}
\label{sec:est}

We aim to investigate the relationship between next-unit surprisal under a speech LM and processing effort, as measured by changes in pupil size. Surprisal is computed for multiple units that vary along two dimensions: level of abstraction (abstract units versus surface realizations) and granularity of the acoustic signal (orthographic words, phones, and frames). This yields four surprisal estimates in total: abstracted word-level surprisal obtained via the estimator introduced in \cref{sec:ap_is}, and word-, phone-, and frame-level surprisal computed directly over the acoustic signal (see \cref{fig:trial-viz}). 
Below we discuss the implementation of the estimator used to obtain surprisal over abstract units.

\subsection{Importance Sampling}
\label{subsec:is}
We derive an importance sampling estimator $\estimator_{,\sampsize}\left(\texttoken_t\mid\texttokseq_{<t}\right)$ that allows us to approximate the conditional probability of an abstract orthographic word from a speech LM (see \cref{sec:ap_is}). %
In importance sampling, we do not sample directly from a speech language model (e.g., randomly sampling the next token given some context), as this would rarely result in the unit of interest $\texttoken_t$. Rather, we draw speech strings from a proposal distribution $\proposal$ and reweight them by the ratio $\speechLM  /  \proposal$ \citep{robert2004monte}. 
Due to \cref{prop:is_estimator} and linearity of expectation, the unconditional importance sampling estimator that we derive is unbiased, i.e., the expectation of the estimator equals the quantity of interest $\textLMPref\left(\texttoken\right)$. Given $\sampsize$ i.i.d.\ samples $\speechtokseq^{\left(1\right)}, \ldots, \speechtokseq^{\left(\sampsize\right)} \sim \proposal$, we first estimate the prefix probability of a text string $\texttokseq$ as
\begin{multline}
\label{eq:inc_is_est}
    \estimator_{,\sampsize}\left(\texttokseq\right) \defeq \frac{1}{\sampsize} \sum_{n=1}^{\sampsize} 
    \left[
        \indicator{\speechtokseq^{\left(n\right)} \in \precover\left(\texttokseq\right)}\right. \\
        \left.\times\frac{\speechLMPref\left(\projection{\texttokseq}\left(\speechtokseq^{\left(n\right)}\right)\right)}{\prefixProposal\left(\projection{\texttokseq}\left(\speechtokseq^{\left(n\right)}\right)\right)}
    \right],
\end{multline}

\noindent 
where 
$\projection{\texttokseq}$ projects each speech string in the precover to its shortest prefix that also lies in the precover (see \cref{sec:ap_is} for the formal derivation). The conditional probability is then estimated as the ratio
\begin{equation}
\label{eq:cond_is_est}
    \estimator_{,\sampsize}\left(\texttoken_t \mid \texttokseq_{<t}\right) \defeq \frac{\estimator_{,\sampsize}\left(\texttokseq_{<t}\texttoken_t\right)}{\estimator_{,\sampsize}\left(\texttokseq_{<t}\right)}.
\end{equation}

\noindent The conditional estimator in \cref{eq:cond_is_est} is consistent, meaning it will converge to the true conditional probability as the number of samples approaches infinity (see \cref{prop:consistency}). We analyze the estimator's variance in \cref{app:est_var} and the bias of the ratio in \cref{prop:ratio_bias}.

In our implementation of the estimator, our target distribution is an autoregressive speech LM such as the one described in \cref{speech-lr-factorization}. We test four speech LMs: a generative speech language model (GSLM) released by \citet{lakhotia-etal-2021-generative}, SpiRit-LM \citep{nguyen-etal-2025-spirit}, and the small and large variants of Llama-Mimi \citep{sugiura2025llama}. See Appendix \cref{app:models} for model details.

As our proposal distribution $\proposal$, we take a text-to-speech (TTS) model that generates speech for a given input text. Given that TTS models are trained to generate speech corresponding to the input text, we expect that sampling from $\proposal$ is more likely to yield valid realizations of our target texts than if we were to sample directly from a speech LM. We specifically use Sesame's Conversational Speech Model (CSM),\footnote{\url{https://huggingface.co/sesame/csm-1b}} which is trained on multi-speaker conversational speech. 
We choose a multi-speaker model so that the proposal covers a diverse range of voices. \cref{subsec:csm_diversity} analyzes speaker identity in the resulting samples.
CSM trains Llama-3.2-1B \citep{llama3} with an alphabet of text and audio tokens,
where audio tokens in the alphabet are derived from the first codebook of Mimi speech tokens \citep{mimi_tokenizer}. Mimi is a neural codec trained with both distillation from a semantic audio feature extractor and an autoencoding objective with a discretization bottleneck. The discretized representations at this bottleneck are the speech tokens used in CSM.  

For a given string $\texttokseq \in \texttokStrings$ with $\lvert\texttokseq\rvert=T$, we sample $\sampsize=500$ spoken realizations from the CSM TTS model that we take as our proposal distribution $\proposal$. Conditioning on the full text of the stimuli $\texttokseq$, we generate audio from CSM
using a temperature of 1.0.
We then use the Montreal Forced Aligner (MFA) \citep{McAuliffeSM0S17} to segment the sampled audio into realizations of individual orthographic words, giving us $\sampsize$ spoken realizations of each of the $T$ units in our string. Using realizations of individual units $\texttoken$ from the same sample $\speechtokseq^{(n)}$, we can then calculate $\estimator_{,\sampsize}\left(\texttoken_t\mid\texttokseq_{<t}\right)$ using \cref{eq:cond_is_est}. 

The estimator in \cref{eq:inc_is_est} requires a prefix monotone mapping function $\asroracle$ to check for precover membership (see \cref{sec:ap_is}). ASR systems that condition on the full input sequence do not satisfy this property, as additional audio can revise previously generated text. We therefore require a causal ASR system whose outputs for earlier segments remain fixed as more speech is observed.
We approximate this prefix monotone mapping function using the SimulStreaming implementation of Whisper for speech transcription \citep{Radford23, machavcek2025simultaneous}. Because our estimator requires transcriptions of all samples and we repeat the estimation for many stimuli, we use Whisper Tiny to balance speed and performance.

\subsection{Assessing Surprisal Theory}
\label{subec:surp}
Using pupillometry data as the psychometric signal of processing effort, we assess the predictive power of surprisal values obtained from our estimates of the marginalized surprisal values for orthographic words from a speech LM. We also compare these against surprisal values derived from unmarginalized orthographic word, phone, and frame probabilities, i.e., the probability the speech LM assigns to units in the specific utterances in the stimuli. 

\paragraph{Stimuli} In \citet{winn2016rapid}, proportional change in participants' pupil diameters is measured while listening to stimuli and then repeating the stimuli aloud. As in \citet{winn2016rapid}, we use recordings and transcriptions from 162 items in the Revised Speech in Noise corpus \citep{bilger1984standardization}, each of which corresponds to a single sentence. The stimuli vary in how much the context constrains the final word, for instance ranging from the very constrained phrase ``The doctor prescribes the drug'' to the more open-ended ``Bill might discuss the foam.'' We only include data from the 21 participants with normal hearing and trials with non-vocoded stimuli in our analysis. 
We describe our approaches to obtaining surprisal values for the stimuli below.

\paragraph{Estimated Marginalized Surprisal} Using the approach to importance sampling discussed in \cref{subsec:is}, we estimate the probability our speech LM assigns to abstract orthographic words in some context, $\estimator_{,\sampsize}(\texttoken_t \mid \texttokseq_{<t})$. 
We then obtain surprisal values using the following: 
\begin{equation}
    \label{eq:surprisal_est}
    \surpest\left(\texttoken_t, \texttokseq_{<t}\right) = -\log\estimator_{,\sampsize}\left(\texttoken_t\mid\texttokseq_{<t}\right).
\end{equation}

\paragraph{Unmarginalized Speech LM Surprisal} We compare the estimated marginal surprisals to unmarginalized speech LM surprisal values. For each of the stimuli from \citet{winn2016rapid}, we run the MFA on the original audio to obtain word and phone boundaries. Given these unit boundaries, we obtain conditional probabilities of a specific realization of a unit. We then calculate the surprisal for each unit in the stimuli as follows:
\begin{equation}
    \label{eq:surprisal_slm}
    \surpslm\left(\speechtoken_t,\speechtokseq_{<t}\right) = -\log\speechLMPref\left(\speechtoken_t \mid \speechtokseq_{<t}\right).
\end{equation}

\noindent We calculate the unmarginalized probabilities of the stimuli at the level of the orthographic word, the phone, and the frame. By frame, we refer to the surprisal for each speech token $\speechtoken$ in a string $\speechtokseq$, which corresponds to 20 ms of audio for GSLM and 40 ms for SpiRit-LM.

\section{Analyses}
\label{sec:analysis}

In the empirical portion of the article, we carry out two analyses. The first serves to confirm that we are sampling utterances from a diverse range of speakers to justify our choice of proposal distribution. 
Our second analysis assesses surprisal theory using surprisal from speech LMs to model pupil data.

\subsection{CSM Sample Diversity} Though the authors anecdotally note variation between the voices and prosodic features of sampled utterances from CSM, we seek to quantitatively assess whether the $\sampsize$ realizations of our stimuli indeed capture a wide variety of realizations for the same phrase. To do so, we analyze speaker embeddings of our sampled utterances from a ResNet model pretrained for speaker verification using the SpeechBrain toolkit \citep{villalba2020101026, speechbrain}. By assessing whether we have sampled from a diverse space of speakers, we aim to justify our choice of proposal distribution. 

Specifically, we randomly select one of our stimuli and obtain speaker embeddings from each of the $\sampsize$ samples of that stimulus obtained from CSM. We then randomly select a speaker from the TIMIT corpus \citep{garofolo1993darpa} and obtain speaker embeddings from each of that selected speaker's utterances in the corpus (``within-speaker'' embeddings). Finally, we randomly select one of the sentences spoken by all 462 speakers in the TIMIT corpus train split and obtain the speaker embedding for each speaker using that utterance (``between-speaker'' embeddings). 
To assess speaker diversity, we measure the cosine distance between all speaker embeddings in each of the three sets of embeddings.

\looseness=-1

\subsection{Modeling Pupil Size}
Generalized additive mixed models (GAMMs) are often used to model psycholinguistic responses given their ability to capture nonlinear relationships between predictors and response variables \citep[\emph{inter alia}]{van2019analyzing, berzak2023eye}. We therefore apply them to compare the predictive power of our surprisal measures for pupillometry data from \citet{winn2016rapid}.
\looseness=-1

Let $y_i$ denote pupil diameter at time $t_i$ for listener $\ell\left(i\right)$
and stimulus $j\left(i\right)$. Let $\omega \ge 0$ denote an assumed pupillary
lag: a pupil sample recorded at $t_i$ is taken to reflect the linguistic unit
active $\omega$ ms earlier. Let $m = m\left(i\right)$ index the unit active at
$t_i - \omega$, and write $s_m$ for its surprisal. We define $\tau_i$ as time
since onset of that unit,
$\tau_i = \left(t_i - \omega\right) - \mathrm{onset}\left(m\left(i\right)\right)$.

We run a series of models with two basic structures: a baseline model and a model with current and spillover main effects. Our baseline model consists of random effect smooths for $\ell\left(i\right)$ and $j\left(i\right)$ to account for listener and stimulus variability. To this baseline, our current and spillover main effects models add main effect smooths for surprisal of the current and $K$ prior units. We fit several main effects models to compare the various marginalized and unmarginalized speech LM surprisals defined in \cref{subec:surp}. 

The following is a formal definition of the main effects model described above:
\begin{equation}
\begin{aligned}
y_i =&  \beta_0
+ f_1\left(s_m\right)
+ \sum_{k=1}^{K} f_{1+k}\left(s_{m-k}\right)
\\ 
&+ b_{\ell\left(i\right)}
+ b_{j\left(i\right)}
+ \varepsilon_i .
\end{aligned}
\end{equation}

\noindent Here, $s_{m-k}$ denotes $k$-lag spillover over units (e.g., previous words). The functions $f_r\left(\cdot\right)$ are univariate smooth effects represented using low-rank penalized regression splines.
Random effects $b_{\ell}$ and $b_j$ account for listener- and item-level variability, and $\varepsilon_i$ represents a Gaussian noise term, $\varepsilon_i \sim \mathcal{N}\left(0,\sigma^2\right)$. 
The baseline includes only random effects and $\varepsilon_i$.\looseness=-1

Prior to model fitting, surprisal values are standardized to place predictors on a common scale. In our experiments, we set $K=1$, corresponding to single-unit spillover terms. Following \citet{winn2016rapid}, we first set $\omega=1000$ ms. However, as pupillary response is known to vary with factors like luminosity and task difficulty \citep{van2019analyzing}, we also test two other physiologically plausible offset times: $\omega \in \{900, 1300\}$. We choose 900 and 1300 ms based on discussion by \citet{van2019analyzing}.\looseness=-1

We fit models with the \texttt{bam()} function in the \texttt{mgcv} package \citep{wood_generalized_2006}. All smooth terms are estimated using restricted maximum likelihood. Models are compared using ten-fold cross-validation by item (see \cref{ap:gamm_sum} for descriptions of fold size), reporting differences in held-out log-likelihood relative to the baseline. Uncertainty is quantified via bootstrap resampling. To study whether our main effects models result in significant improvements over the baseline model in terms of predicting pupil sizes at a given time, we carry out one-sided paired permutation testing with $B=5000$ sign-flips and apply the Holm--Bonferroni procedure to correct for conducting multiple significance tests. We also use a one-sided, paired bootstrap test on the difference in held-out delta log-likelihood values to test whether marginalized surprisal values yield significantly higher delta log-likelihoods than unmarginalized surprisal values from the same speech LM. %
\section{Results}
\subsection{CSM Sample Diversity}
\label{subsec:csm_diversity}
\begin{figure}
    \centering
    \includegraphics[width=\linewidth]{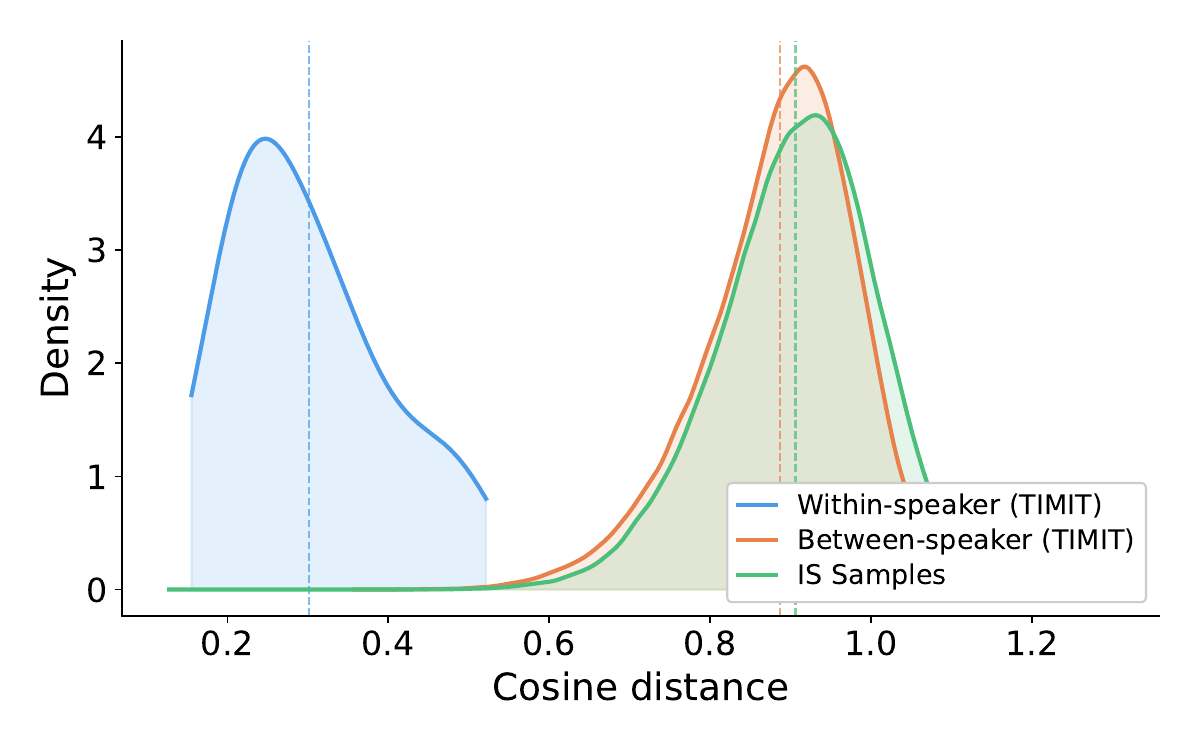}
    \caption{Distributions of cosine distances between speaker embeddings for the within-speaker TIMIT samples, between-speaker TIMIT samples, and the samples generated by CSM for the present study. Dotted vertical lines represent mean cosine distances.}
    \label{fig:speak_emb}
\end{figure}

\begin{figure*}[h]
    \centering
    \includegraphics[width=\linewidth]{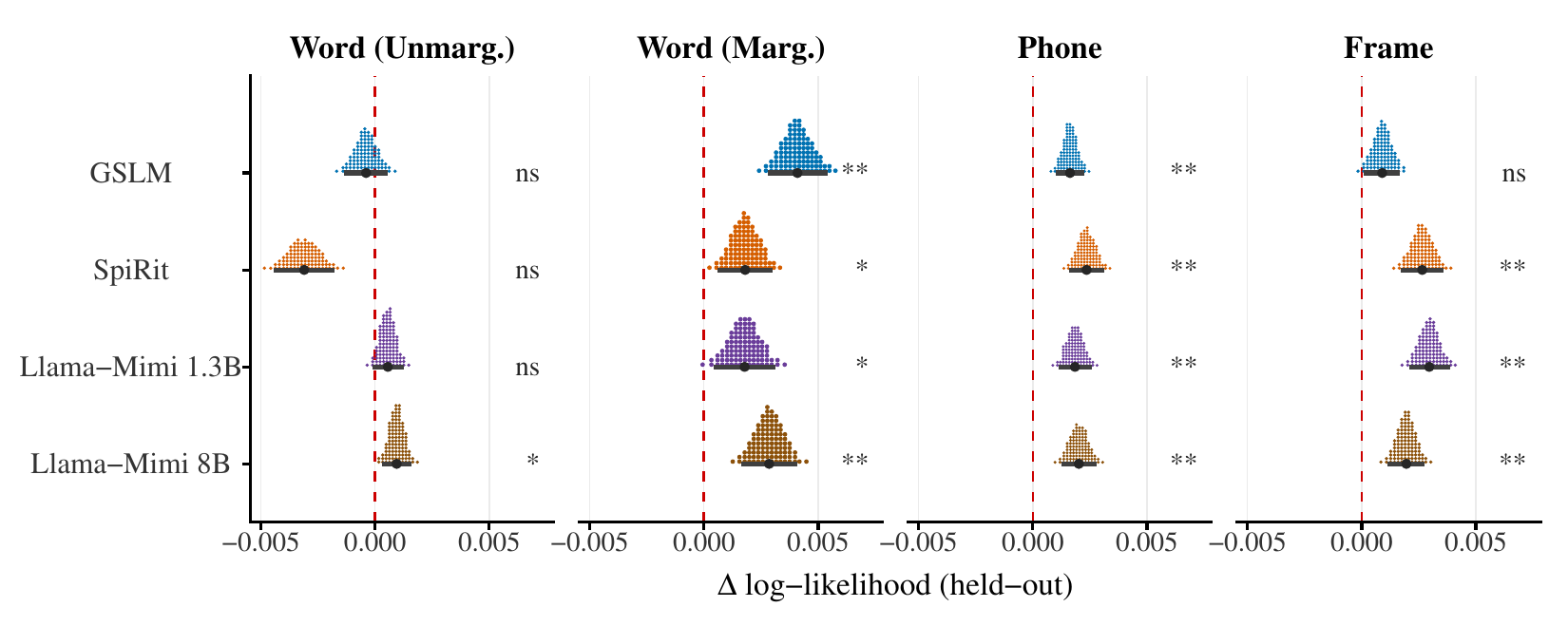}
    \caption{Held-out delta log-likelihood relative to baseline model of pupil size (offset: $\omega=1000$ ms).
    Each dot represents 1\% of the bootstrap distribution of the mean delta log-likelihood; the point and bars below give the bootstrap mean and 95\% CI. Stars correspond to significant corrected $p$-values of a model compared to the no-surprisal baseline at $\alpha$ levels of 0.05 (*), 0.01 (**), and 0.001 (***); ns = not significant.}
    \label{fig:full_dll}
\end{figure*}

When we plot the distributions of cosine distances in our sets of speaker embeddings (\cref{fig:speak_emb}), we observe that cosine distances are lower on average in our within-speaker TIMIT sample than in our between-speaker TIMIT sample. The plot shows that the distribution of cosine distances between speaker embeddings from our CSM
samples is clearly closer to the between-speaker TIMIT distribution than the within-speaker distribution.

\subsection{Modeling Pupil Size}
Delta log-likelihoods for models of pupil size fit with various surprisal values at offset of $\omega=1000$ ms are shown in \cref{fig:full_dll} (see \cref{app:sup_res} for tables of values). Note that delta log-likelihood values can only be compared within each granularity, as each granularity represents a distinct regression problem defined over a different number of pupil observations: orthographic-word windows, phone windows, and frame windows \citep{kiegeland-etal-2026-proper}. At the word level, marginalized and unmarginalized GAMMs within a speech LM are fit on the same folds (and are therefore comparable), but delta log-likelihood is not comparable between speech LMs. Because of estimator error, the ratio in \cref{eq:cond_is_est} can yield undefined surprisal values, meaning estimators for different speech LMs successfully yield estimates for different proportions of the tokens in the corpus (coverage for each speech LM is found in \cref{ap:gamm_sum}).\looseness=-1

At the level of the orthographic word, we find that for the four speech LMs tested, estimates of the marginalized surprisal values result in larger improvements over the baseline
than the unmarginalized values ($p<0.05$ for all within-model marginalized versus unmarginalized differences; see \cref{tab:marg_vs_unmarg}).  Unmarginalized lexical surprisal from speech LMs rarely results in significant improvements over the baseline, with the exception of values obtained from Llama-Mimi 8B. 
\looseness=-1

At the phone granularity, we find that 
adding unmarginalized phone surprisal consistently yields significant improvements over the baseline. At the level of the frame, unmarginalized surprisal values almost always result in significant improvements over the baseline model, except for GSLM frame surprisals. 

We present the mean delta log-likelihoods for analogous models at offsets of $\omega \in \{900,1000,1300\}$ ms in \cref{fig:offset_sweep} (see \cref{app:sup_res} for tables of values). At the word level, we observe that fitting GAMMs with marginalized surprisals consistently results in higher delta log-likelihoods than fitting GAMMs with unmarginalized lexical surprisals from the same speech LM, regardless of the offset. Improvements in delta log-likelihoods from phone and frame surprisals are generally highest at 900 and 1000 ms offsets.

For reference, we also fit models of pupil size using word-level surprisals obtained from the text LM GPT-2 \citep{radford2019language} over the stimuli transcriptions (see \cref{app:gpt2} for details and results). These also result in significant improvements over the baseline with no surprisal. The larger the offset used, the less predictive GPT-2-derived surprisals are of pupil size. We note that comparison with word-level delta log-likelihoods from speech LMs is not warranted. Regardless of the specific delta log-likelihoods obtained, our approach derives the target predictor (surprisal) from the same input signal as participants in the pupillometry study, making it a more valid approach to obtaining surprisal in a listening comprehension study than using the transcribed speech as the input signal. 

Finally, we refit our models, now adding a smooth term for the item-normalized time at which pupil dilation is recorded. Adding this term to both the main effects models and the baseline allows us to consider the extent to which surprisal adds predictive power to pupil dilation beyond the time in the sequence. The results indicate that adding surprisal to the baseline of time and random effects for participants and items does not result in significant improvements, regardless of the modality or granularity of the unit.

\section{Discussion}

\begin{figure*}[t]
    \centering
    \includegraphics[width=\linewidth]{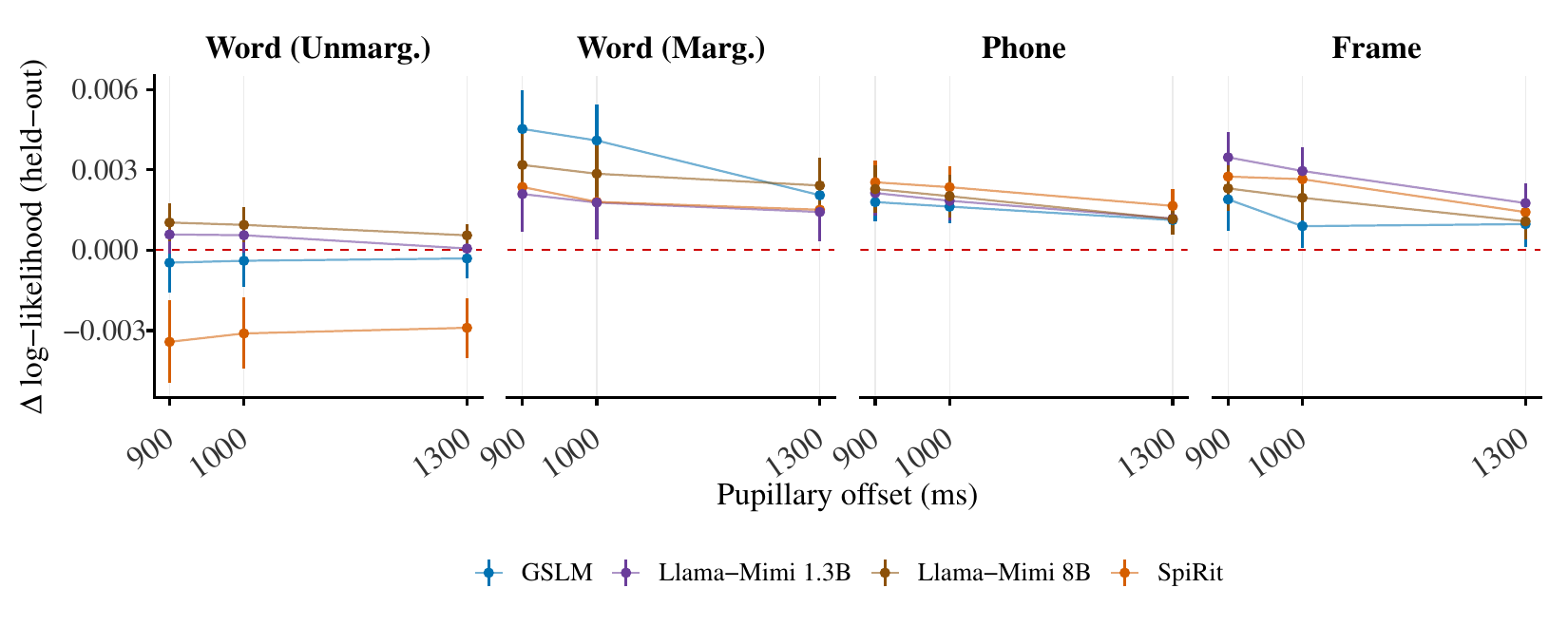}
    \caption{Held-out delta log-likelihood relative to baseline model of pupil size at various offset times. Points give the mean delta log-likelihood and bars give 95\% CI. Values at 1000 ms reflect those in \cref{fig:full_dll}. See \cref{app:sup_res} for significance of delta log-likelihood values over the baseline model.}
    \label{fig:offset_sweep}
\end{figure*}

In the present work, we asked how surprisal should be defined when it is estimated from the acoustic signal using a speech LM rather than from text. 
Because speech LMs assign probabilities to speech-code sequences, their probabilities can be interpreted either as probabilities of specific acoustic realizations or, after marginalizing over possible realizations, as probabilities of abstract linguistic units. 
Our results favor the latter interpretation at the word level: marginalized surprisal estimates for abstract lexical units were more predictive of pupil dilation than realization-specific word surprisal. At the same time, surprisal computed over finer-grained units (phones and frames) also improved models of pupil size, suggesting that speech-LM probabilities can model information relevant to listening effort at multiple granularities.

Taken together, our results suggest that in speech processing, humans are able to abstract over varying acoustic characteristics of input and use abstract lexical units to make inferences about upcoming words. This being the case, models of speech comprehension should distinguish between the probability of a particular acoustic realization and the probability of the abstract lexical unit it realizes.
This is consistent with the speech perception literature: ERP data collected while participants listened to stories indicate that listeners anticipate next words given a context \citep{van_berkum_anticipating_2005}, and word recognition in noisy conditions is easier when the word is semantically or grammatically predictable in context \citep{miller_perceptual_1963}. Moreover, there is evidence from pupillometry studies that with more context, repairing misheard words requires less cognitive effort \citep{smith2025repairing}. Our findings also parallel work on reading comprehension, for example the finding by \citet{vigly2025comprehension} that surprisal of surface forms is no longer predictive of reading times in texts with typographical errors. \looseness=-1

Our findings for phones and frames are consistent with surprisal theory research that has previously postulated that sub-lexical units are also processed in a predictive manner \citep{smith2013effect}. In certain contexts, it has been shown that finer-grained units may play a role in probabilistic anticipation during speech processing. For example, eye-tracking data indicates that variation in voice onset time, a primary correlate to the stop voicing contrast (e.g., /p/ vs. /b/) may be used to predict an upcoming word in ambiguous contexts \citep{mcmurray_within-category_2009}. Given that predictions from cues like voice onset time are based on even finer-grained input than the phone level, such anticipatory processing may explain the significance of our frame-level surprisal models, despite these units not being linguistically motivated. \looseness=-1

Our findings show that surprisal from speech LMs varies in how well it predicts pupil dilation depending on granularity: for example, whereas unmarginalized phone surprisals from GSLM result in significant improvements over the baseline model, GSLM's unmarginalized frame and word surprisals do not. While there is a large body of work studying how characteristics such as size and training data impact the so-called psychometric predictive power (PPP) of text LMs \citep[\emph{inter alia}]{kuribayashi-etal-2024-psychometric, oh-etal-2025-inverse}, this line of work has yet to be explored for speech LMs. 
Modeling decisions that yield high PPP for some units may not necessarily do the same for other units, so future work should study how different speech LM characteristics impact their cognitive plausibility across unit inventories.
\looseness=-1

Our finding that surprisal---regardless of modality, abstraction, or unit over which it is obtained---does not add predictive value over a baseline that includes information about when a pupil measurement was taken is consistent with a growing body of literature about the relative importance of surprisal as a predictor in sentence processing. Indeed, it has been shown that factors such as unigram frequency, word length, oculomotor noise, and parafoveal preview can all explain more variance in reading times than surprisal alone \citep{heilbron2023lexical,opedal-etal-2024-role}. Thus, while we provide an approach to estimating word probabilities from speech LMs, our empirical results add to the body of work suggesting a relatively less important role for surprisal in sentence processing.

Our results also have implications for natural language processing. Notably, they call into question the construct validity of common speech language modeling benchmarks such as sBLIMP and sWUGGY \citep{nguyen2020zero}. In these benchmarks, recordings of words and non-words (sWUGGY) or syntactically correct and incorrect (sBLIMP) sentences are given to speech language models, and higher scores are given when models deem the words and grammatical audio more probable than the non-words and ungrammatical audio. That is, these benchmarks score models based on the probability they assign to individual realizations of words or sentences. As we have argued here, the probabilities assigned by speech LMs to specific utterances are not representative of the probabilities the model assigns to abstract linguistic units, as the probabilities can reflect the specific utterance's prosodic, acoustic, or speaker characteristics. This is particularly concerning for cases where the benchmark has been applied to evaluate speech LMs that presume to model abstract, invariant units \citep[\textit{inter alia}]{poli-spidr, chang25_interspeech}. We hope our work encourages further benchmark development for speech language modeling that accounts for the idea that the relative probability assigned to a specific utterance might change if we manipulate the prosody, voice, or other properties of the audio. \looseness=-1

\section{Conclusion}

The current body of literature on surprisal theory applies mainly to questions of sentence processing during reading rather than spoken language comprehension. Though speech language models provide a means of applying a similar research program to spoken language comprehension, obtaining surprisal from the speech signal brings several added challenges: namely, the specific psychometric signal to model, the question of which units to consider given the naturally continuous nature of speech, and the complication of modeling many variations of the same abstract unit. We address the last of these with an importance sampling estimator that approximates the surprisal a speech LM assigns to abstract units. 
Among the approaches to obtaining surprisal from a speech LM that were tested, we found that our estimations of abstract, lexical surprisals better modeled our signal of processing effort, pupil size, than realization-specific word probabilities. We also found that phone and frame surprisal from speech LMs improved models of pupil size. 
These findings suggest that the surprisal of abstract lexical units is a key component of processing difficulty in spoken language comprehension.  Our approach to estimating surprisal for abstract units opens the door to further study the contexts in which we base our predictions of upcoming input on sub-lexical units and specific realizations. 

\section*{Limitations}
Our experiments focus on a single listening study \citep{winn2016rapid} with 162 items from the Revised Speech in Noise corpus \citep{bilger1984standardization}. We only analyze data from normal hearing listeners, but further examination would be needed to draw conclusions about whether the results hold for people with hearing loss, who have been shown to use context differently than normal hearing listeners \citep{winn2016rapid}. The nature of the stimuli limits the extent to which we can study sub-lexical phenomena and the impact of prosody and indexical information in the speech signal on processing effort during listening. In stimuli designed to study listening in unfamiliar stress patterns (e.g., accented stress patterns), the relative predictive power of marginalized and unmarginalized surprisals from a speech LM may differ. 
Likewise, we could imagine the following stimuli in a listening study: ``I reached into my toolbox and grabbed a hamburger.'' The word \textit{hamburger} is surprising in this context, but because the phonetic sequence \textipa{/hæm-/} is expected (e.g., as the start of the word ``hammer''), phone surprisal might also be predictive of pupil size. That being the case, without stimuli designed to capture sub-lexical effects, we likely do not fully characterize the role of finer-grained units' surprisal during speech processing. We are also limited to read (rather than spontaneous) speech in English in this study.\looseness=-1

Another limitation of the present study is the ASR sample rejection; though this step is necessary to exclude TTS errors, it is also a point at which errors can be made. Though a fuller error analysis is not possible because we do not have ground truth transcriptions for the sampled audio, we suspect that the most likely errors would be mis-transcriptions of valid realizations of a word. This may result in a bias against audio with especially atypical prosody or speaker features. We expect that exploring larger ASR models would decrease such errors, albeit at higher computational costs. We expect that better ASR models and a higher initial sample size $N$ would also contribute to higher estimator coverage overall.\looseness=-1

We also note that the present work only studies abstract units at the level of orthographic words and does not estimate marginalized phone surprisals. A direct application of our importance sampling approach to the phone granularity would yield estimates of the probabilities speech LMs assign to maximally context-sensitive phones in our stimuli across many speakers. However, extending our approach to study less context-sensitive units at the phone level (e.g., allophones or phonemes) would require extending the method to also sample the relevant units in diverse phonological contexts.\looseness=-1

\section*{Ethical Concerns}

We use only previously studied pupillometry and speech data with explicit permission from the author \citep{winn2016rapid}. We do not foresee any ethical concerns from the use of this data to study human speech processing.

\section*{Acknowledgments}

ZH was supported by the Swiss National Science
Foundation (Grant No. 10003607).

\bibliography{custom}

\newpage
\onecolumn

\appendix
\startappendixtoc

\appendixtableofcontents
\clearpage

\section{Derivation of the Importance Sampling Estimator}
\label{sec:ap_is}

\begin{restatable}{proposition}{isEstimator}
\label{prop:is_estimator}
Let $\speechLM \ll \proposal$ (i.e., $\speechLM$ is absolutely continuous with respect to $\proposal$),
and let $\asroracle \colon \speechtokStrings \to \texttokStrings$ be a prefix monotone function.
Define the precover $\precover\left(\texttokseq\right) \defeq \left\{\speechtokseq \in \speechtokStrings \colon \asroracle\left(\speechtokseq\right) \succeq \texttokseq\right\}$.
Let $\projection{\texttokseq} \colon \precover\left(\texttokseq\right) \rightarrow \prefixbase\left(\precover\left(\texttokseq\right)\right)$ project each $\speechtokseq$ to its longest prefix in $\prefixbase\left(\precover\left(\texttokseq\right)\right)$.\footnote{The prefix base of the precover is defined as $\prefixbase\left(\precover\left(\texttokseq\right)\right) \defeq \!\left\{\speechtokseq \in \precover\left(\texttokseq\right) \colon \nexists\, \speechtokseq' \in \precover\left(\texttokseq\right) \text{ s.t.\ } \speechtokseq' \prec \speechtokseq\right\}$.}
Then, for any $\texttokseq \in \texttokStrings$, we have
\begin{equation}
\label{eq:is_prefix}
\textLMPref\left(\texttokseq\right) = \E_{\speechtokseq \sim \proposal} \indicator{\speechtokseq \in \precover\left(\texttokseq\right)}\frac{\speechLMPref\left(\projection{\texttokseq}\left(\speechtokseq\right)\right)}{\prefixProposal\left(\projection{\texttokseq}\left(\speechtokseq\right)\right)}
\end{equation}
\end{restatable}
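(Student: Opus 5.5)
The plan is to reduce the expectation on the right of \cref{eq:is_prefix} to a sum over the prefix base $\prefixbase(\precover(\texttokseq))$, and then to show that this sum equals $\textLMPref(\texttokseq)$ using the prefix-probability structure of $\speechLM$. Write $B \defeq \prefixbase(\precover(\texttokseq))$.

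\textbf{Step 1: partition the precover.} Every $\speechtokseq \in \precover(\texttokseq)$ has a unique shortest prefix lying in $\precover(\texttokseq)$. That prefix is in $B$, and this is what $\projection{\texttokseq}(\speechtokseq)$ denotes; it is the longest prefix lying in $B$ in the statement's wording, and also the only one, since $B$ is prefix-free. Prefix monotonicity of $\asroracle$ shows that $\precover(\texttokseq)$ is upward closed: if $\asroracle(\speechtokseq)\succeq\texttokseq$ and $\speechtokseq\preceq\speechtokseq'$, then $\asroracle(\speechtokseq')\succeq\asroracle(\speechtokseq)\succeq\texttokseq$. Hence
\[
\precover(\texttokseq)=\bigsqcup_{\mathbf b\in B}\{\speechtokseq:\mathbf b\preceq\speechtokseq\},
\]
and the union is disjoint because $B$ is prefix-free.

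\textbf{Step 2: rewrite the expectation.} Group the terms of the expectation by the value of $\projection{\texttokseq}(\speechtokseq)=\mathbf b$. The weight $\speechLMPref(\mathbf b)/\prefixProposal(\mathbf b)$ is constant on each block, and $\mathbb P_{\proposal}[\mathbf b\preceq \X]=\prefixProposal(\mathbf b)$. The right-hand side therefore becomes
\[
\sum_{\mathbf b\in B}\prefixProposal(\mathbf b)\,\frac{\speechLMPref(\mathbf b)}{\prefixProposal(\mathbf b)}=\sum_{\mathbf b\in B}\speechLMPref(\mathbf b).
\]
Here absolute continuity does the work. If $\prefixProposal(\mathbf b)=0$, the block contributes nothing under $\proposal$, and $\speechLM\ll\proposal$ forces $\speechLMPref(\mathbf b)=0$, so no mass is lost. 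I will state the convention $0/0=0$ explicitly. If the sampling process can produce strings of infinite length, I will interpret everything as the law of the process and apply Fubini/Tonelli; all terms are nonnegative, so the regrouping is justified.

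\textbf{Step 3: identify the sum.} Applying the same partition under $\speechLM$ gives
\[
\sum_{\mathbf b\in B}\speechLMPref(\mathbf b)=\mathbb P_{\speechLM}[\X\in\precover(\texttokseq)]=\mathbb P_{\speechLM}[\asroracle(\X)\succeq\texttokseq].
\]
By \cref{beta_dist}, this is the total $\textLM$-mass of strings extending $\texttokseq$, which is $\textLMPref(\texttokseq)$.

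\textbf{Main obstacle.} I expect the hardest part to be the measure-theoretic care in Steps 1--2. Prefix events must be shown to be disjoint and to cover the precover, and the infinite or possibly non-tight case must be handled so that $\prefixProposal$ matches the probability of the cylinder set. I will first verify upward closure carefully, since that is exactly where prefix monotonicity is needed. Without it, a string in the precover could extend $\mathbf b\in B$ while some extension of $\mathbf b$ lies outside the precover, and the block decomposition would fail.
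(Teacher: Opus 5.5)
Your proposal is correct and follows essentially the same route as the paper: both decompose $\precover(\texttokseq)$ into disjoint cylinders indexed by its prefix base, multiply and divide by $\prefixProposal$ using $\speechLM \ll \proposal$, and regroup via $\projection{\texttokseq}$ to reach the expectation under $\proposal$; the only difference is that you run the chain from right to left. You also make explicit two points the paper leaves implicit in its ``disjoint cylinders'' and ``multiply and divide'' steps: that upward closure of the precover is exactly where prefix monotonicity is used, and that blocks with $\prefixProposal(\mathbf b)=0$ contribute nothing.
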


\begin{proof}
\begin{subequations}
\begin{align}
    \textLMPref(\texttokseq) &= \sum_{\speechtokseq \in \speechtokStrings} \indicator{\asroracle(\speechtokseq)\succeq\texttokseq}\speechLM(\speechtokseq) \justification{def.\ of $\textLMPref$} \\
     &= \sum_{\speechtokseq \in \precover(\texttokseq)} \speechLM(\speechtokseq) \justification{def.\ of $\precover(\texttokseq)$} \\
 &= \sum_{\speechtokseq \in \prefixbase(\precover(\texttokseq))} \speechLMPref(\speechtokseq) \justification{$\precover(\texttokseq)$ decomposes into disjoint cylinders} \\
  &= \sum_{\speechtokseq \in \prefixbase(\precover(\texttokseq))} \prefixProposal(\speechtokseq)\frac{\speechLMPref(\speechtokseq)}{\prefixProposal(\speechtokseq)} \justification{multiply and divide by $\prefixProposal$; valid since $\speechLM \ll \proposal$}  \\
  &= \sum_{\speechtokseq \in \speechtokStrings}\proposal(\speechtokseq)  \indicator{\speechtokseq \in \precover(\texttokseq)} \frac{\speechLMPref(\projection{\texttokseq}(\speechtokseq))}{\prefixProposal(\projection{\texttokseq}(\speechtokseq))} \justification{$\sum_{\speechtokseq' \succeq \speechtokseq} \proposal(\speechtokseq') = \prefixProposal(\speechtokseq)$; reindex via $\projection{\texttokseq}$}  \\
    &= \E_{\speechtokseq \sim \proposal} \indicator{\speechtokseq \in \precover(\texttokseq)}\frac{\speechLMPref(\projection{\texttokseq}(\speechtokseq))}{\prefixProposal(\projection{\texttokseq}(\speechtokseq))}.  \justification{def.\ of expectation under $\proposal$}
\end{align}
\end{subequations}
\end{proof}
Given $\sampsize$ i.i.d.\ samples $\speechtokseq^{(1)}, \ldots, \speechtokseq^{(\sampsize)} \sim \proposal$, the Monte Carlo estimator corresponding to \cref{eq:is_prefix} is
\begin{equation}
\label{eq:mc_estimator}
    \estimator_{,\sampsize}(\texttokseq) \defeq \frac{1}{\sampsize} \sum_{n=1}^{\sampsize} \indicator{\speechtokseq^{(n)} \in \precover(\texttokseq)}\frac{\speechLMPref(\projection{\texttokseq}(\speechtokseq^{(n)}))}{\prefixProposal(\projection{\texttokseq}(\speechtokseq^{(n)}))}.
\end{equation}

The conditional prefix probability is then estimated as the ratio
\begin{equation}
\label{eq:cond_mc_estimator}
    \estimator_{,\sampsize}(\texttoken_t \mid \texttokseq_{<t}) \defeq \frac{\estimator_{,\sampsize}(\texttokseq_{<t}\texttoken_t)}{\estimator_{,\sampsize}(\texttokseq_{<t})}.
\end{equation}
\begin{restatable}{proposition}{consistency}
\label{prop:consistency}
Let $\speechtokseq^{(1)}, \ldots \speechtokseq^{(N)} \overset{\text{i.i.d.}}{\sim} \proposal$ and suppose that
\begin{equation}\label{eq:finite-second-moment}
    \E_{\speechtokseq \sim \proposal}\left[\left(\indicator{\speechtokseq \in \precover(\texttokseq)}\frac{\speechLMPref(\projection{\texttokseq}(\speechtokseq))}{\prefixProposal(\projection{\texttokseq}(\speechtokseq))}\right)^2\right] < \infty
\end{equation}
for all $\texttokseq \in \texttokStrings$ and that $\textLMPref(\texttokseq) > 0$ for all $\texttokseq \in \texttokStrings$. Then, we have that
\begin{equation}
    \estimator_{,\sampsize}(\texttoken_t \mid \texttokseq_{<t}) \xrightarrow[\sampsize \to \infty]{\text{a.s.}} \textLMPref(\texttoken_t \mid \texttokseq_{<t}).
\end{equation}
\end{restatable}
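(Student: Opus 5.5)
The plan is to apply the strong law of large numbers separately to the numerator and the denominator of \cref{eq:cond_mc_estimator}, then pass to the ratio using the continuous mapping theorem. Fix $t$ and $\texttokseq_{<t}$, and write
\[
W(\texttokseq) \defeq \indicator{\speechtokseq \in \precover(\texttokseq)}\frac{\speechLMPref(\projection{\texttokseq}(\speechtokseq))}{\prefixProposal(\projection{\texttokseq}(\speechtokseq))}
\]
for the single-sample weight, so that $\estimator_{,\sampsize}(\texttokseq)$ is the empirical mean of $N$ i.i.d.\ copies of $W(\texttokseq)$.

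\textbf{Step 1: each estimator converges almost surely.} By \cref{prop:is_estimator}, $\E_{\speechtokseq\sim\proposal}[W(\texttokseq)] = \textLMPref(\texttokseq)$ for every $\texttokseq$. The moment condition \cref{eq:finite-second-moment} gives a finite second moment. A finite second moment implies a finite first moment (by Cauchy--Schwarz, or simply because $W \ge 0$ and its expectation is at most $1$). Kolmogorov's strong law therefore yields $\estimator_{,\sampsize}(\texttokseq) \to \textLMPref(\texttokseq)$ almost surely. We apply this to both $\texttokseq = \texttokseq_{<t}\texttoken_t$ and $\texttokseq = \texttokseq_{<t}$. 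Both estimators are built from the same sample sequence, but that does not matter: each convergence holds on a probability-one event, and the intersection of two such events still has probability one.

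\textbf{Step 2: pass to the ratio.} On that intersection, the pair $\bigl(\estimator_{,\sampsize}(\texttokseq_{<t}\texttoken_t), \estimator_{,\sampsize}(\texttokseq_{<t})\bigr)$ converges to $\bigl(\textLMPref(\texttokseq_{<t}\texttoken_t), \textLMPref(\texttokseq_{<t})\bigr)$. The map $(a,b)\mapsto a/b$ is continuous at every point with $b>0$, and $\textLMPref(\texttokseq_{<t})>0$ by hypothesis. The ratio therefore converges to $\textLMPref(\texttokseq_{<t}\texttoken_t)/\textLMPref(\texttokseq_{<t})$, which equals $\textLMPref(\texttoken_t\mid\texttokseq_{<t})$ by the definition of conditional prefix probability.

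\textbf{Main obstacle.} The only delicate point is that for small $N$ the denominator can be exactly zero, for example when no sample falls in $\precover(\texttokseq_{<t})$, and then the estimator is undefined. We handle this by showing it happens only finitely often almost surely. Since $\estimator_{,\sampsize}(\texttokseq_{<t})\to\textLMPref(\texttokseq_{<t})>0$ on the probability-one event, there is a (random) $N_0$ beyond which the denominator is at least $\textLMPref(\texttokseq_{<t})/2>0$. The ratio is then well defined for all $N\ge N_0$, and convergence is a tail property, so any arbitrary convention adopted for $N<N_0$ is harmless. The positivity assumption in the statement is exactly what makes this step work.
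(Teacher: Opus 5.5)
Your proposal is correct and follows the same route as the paper's proof: you apply the strong law of large numbers separately to the numerator and denominator estimators, then use the continuous mapping theorem with $(x,y)\mapsto x/y$, which is continuous at the limit because $\textLMPref(\texttokseq_{<t})>0$. Your extra steps make explicit what the paper leaves implicit, namely the finite first moment, the intersection of the two probability-one events, and the observation that the denominator is eventually bounded away from zero, so the ratio is well defined for all large $N$.
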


\begin{proof}
The integrand in \cref{eq:mc_estimator} has finite second moment by assumption. 
By the strong law of large numbers, $\estimator_{,\sampsize}(\texttokseq) \xrightarrow[\sampsize \to \infty]{\text{a.s.}} \textLMPref(\texttokseq)$ for all $\texttokseq \in \texttokStrings$.
Because $\textLMPref(\texttokseq_{<t}) > 0$, the function $f(x, y) = x/y$ is continuous at $(\textLMPref(\texttokseq_{<t}\texttoken_t),\, \textLMPref(\texttokseq_{<t}))$. By the continuous mapping theorem, we arrive at
\begin{equation}
\estimator_{,\sampsize}(\texttoken_t \mid \texttokseq_{<t}) = f(\estimator_{,\sampsize}(\texttokseq_{<t}\texttoken_t),\, \estimator_{,\sampsize}(\texttokseq_{<t})) \xrightarrow[\sampsize \to \infty]{\text{a.s.}} f(\textLMPref(\texttokseq_{<t}\texttoken_t),\, \textLMPref(\texttokseq_{<t})) = \textLMPref(\texttoken_t \mid \texttokseq_{<t}).
\end{equation}
\end{proof}

\begin{restatable}{proposition}{ratioBias}
\label{prop:ratio_bias}
Adopt the assumptions of \cref{prop:is_estimator}, and suppose
$\textLMPref(\texttokseq_{<t}) > 0$. For $\speechtokseq \sim \proposal$ define
the single-sample importance weights
\begin{align*}
A &\defeq \indicator{\speechtokseq \in \precover(\texttokseq_{<t}\texttoken_t)}\frac{\speechLMPref(\projection{\texttokseq_{<t}\texttoken_t}(\speechtokseq))}{\prefixProposal(\projection{\texttokseq_{<t}\texttoken_t}(\speechtokseq))}, \\
B &\defeq \indicator{\speechtokseq \in \precover(\texttokseq_{<t})}\frac{\speechLMPref(\projection{\texttokseq_{<t}}(\speechtokseq))}{\prefixProposal(\projection{\texttokseq_{<t}}(\speechtokseq))},
\end{align*}
and suppose $\E[A^2] < \infty$ and $\E[B^2] < \infty$. Assume further that the
denominator estimator $\estimator_{,\sampsize}(\texttokseq_{<t})$ is almost
surely positive and that the conditional estimator from
\cref{eq:cond_mc_estimator} has finite variance. Writing its bias as
\begin{equation*}
\mathrm{Bias}_{\sampsize} \defeq \E\!\left[\estimator_{,\sampsize}(\texttoken_t \mid \texttokseq_{<t})\right] - \textLMPref(\texttoken_t \mid \texttokseq_{<t}),
\end{equation*}
the following hold.
\begin{enumerate}
\item[(i)] \emph{(Exact bias identity.)}
\begin{equation}
\label{eq:ratio_bias_identity}
\mathrm{Bias}_{\sampsize} = -\,\frac{\mathrm{Cov}\left(\estimator_{,\sampsize}(\texttoken_t \mid \texttokseq_{<t}),\; \estimator_{,\sampsize}(\texttokseq_{<t})\right)}{\textLMPref(\texttokseq_{<t})}.
\end{equation}
\item[(ii)] \emph{(Relative-bias bound.)}
\begin{equation}
\label{eq:ratio_bias_bound}
\frac{\left|\mathrm{Bias}_{\sampsize}\right|}{\sqrt{\var\left(\estimator_{,\sampsize}(\texttoken_t \mid \texttokseq_{<t})\right)}}
\;\le\;
\frac{\sqrt{\var\left(\estimator_{,\sampsize}(\texttokseq_{<t})\right)}}{\textLMPref(\texttokseq_{<t})}
\;=\;
\frac{1}{\sqrt{\sampsize}}\,\frac{\sqrt{\var(B)}}{\textLMPref(\texttokseq_{<t})}.
\end{equation}
\end{enumerate}
In particular, the bias measured relative to the estimator's own standard error
is $O(\sampsize^{-1/2})$: it shrinks strictly faster than the sampling noise and
is thus asymptotically negligible relative to it.
\end{restatable}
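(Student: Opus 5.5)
Write $R \defeq \estimator_{,\sampsize}(\texttoken_t \mid \texttokseq_{<t})$, $D \defeq \estimator_{,\sampsize}(\texttokseq_{<t})$, $U \defeq \estimator_{,\sampsize}(\texttokseq_{<t}\texttoken_t)$, $\mu_A \defeq \textLMPref(\texttokseq_{<t}\texttoken_t)$ and $\mu_B \defeq \textLMPref(\texttokseq_{<t})$. The proof rests on the identity $U = RD$. By \cref{eq:cond_mc_estimator} this holds almost surely, because $D>0$ almost surely. By \cref{prop:is_estimator} and linearity of expectation, $U$ and $D$ are unbiased: $\E[U]=\mu_A$ and $\E[D]=\mu_B$. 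Moreover, $D$ is the empirical mean of $\sampsize$ i.i.d.\ copies of $B$, so $\var(D)=\var(B)/\sampsize$, which is finite because $\E[B^2]<\infty$.

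For (i), I will expand the covariance. Both $R$ and $D$ are square integrable: $R$ by the finite-variance assumption and $D$ as just shown. Cauchy--Schwarz therefore makes $\E[RD]$ well defined, and
\begin{equation*}
\mathrm{Cov}(R,D) = \E[RD] - \E[R]\,\E[D] = \mu_A - \E[R]\,\mu_B .
\end{equation*}
Dividing by $\mu_B>0$ gives
\begin{equation*}
\frac{\mathrm{Cov}(R,D)}{\mu_B} = \frac{\mu_A}{\mu_B} - \E[R] = \textLMPref(\texttoken_t \mid \texttokseq_{<t}) - \E[R] = -\mathrm{Bias}_{\sampsize}.
\end{equation*}
This is exactly \cref{eq:ratio_bias_identity}.

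For (ii), I will apply Cauchy--Schwarz to the covariance, $|\mathrm{Cov}(R,D)| \le \sqrt{\var(R)\,\var(D)}$. Combined with (i), this gives $|\mathrm{Bias}_{\sampsize}| \le \sqrt{\var(R)}\,\sqrt{\var(D)}/\mu_B$. If $\var(R)>0$, dividing by $\sqrt{\var(R)}$ gives the inequality. If $\var(R)=0$, then the covariance vanishes, the bias is zero, and the bound is read as trivially true under the convention $0/0$. Substituting $\var(D)=\var(B)/\sampsize$ gives the stated equality. The closing remark then follows immediately, since $\sqrt{\var(B)}/\mu_B$ does not depend on $\sampsize$.

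The step needing the most care is the integrability bookkeeping in (i). That step must ensure that $\E[RD]=\E[U]$ is legitimate, which requires $RD=U$ almost surely and hence a positive denominator, and that every covariance is finite. This is exactly why the hypotheses assume almost-sure positivity of $D$ and finite variance of $R$. Once these are in place, the rest is algebra. I do not expect any other obstacle, since the argument avoids delta-method expansions entirely.
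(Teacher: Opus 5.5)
Your proposal is correct and follows the paper's proof step for step. Almost-sure positivity of the denominator gives $U = RD$, and \cref{prop:is_estimator} gives unbiasedness of $U$ and $D$. Expanding $\mathrm{Cov}(R,D)$ then yields (i), and Cauchy--Schwarz together with $\var(D)=\var(B)/\sampsize$ yields (ii). Your integrability remarks and your treatment of the degenerate case $\var(R)=0$ are minor refinements that the paper leaves implicit.
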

 
\begin{proof}
Write $R \defeq \estimator_{,\sampsize}(\texttoken_t \mid \texttokseq_{<t}) = \estimator_{,\sampsize}(\texttokseq_{<t}\texttoken_t)\,/\,\estimator_{,\sampsize}(\texttokseq_{<t})$
and $r \defeq \textLMPref(\texttoken_t \mid \texttokseq_{<t}) = \textLMPref(\texttokseq_{<t}\texttoken_t)/\textLMPref(\texttokseq_{<t})$.
Each of $\estimator_{,\sampsize}(\texttokseq_{<t}\texttoken_t)$ and
$\estimator_{,\sampsize}(\texttokseq_{<t})$ is a sample mean of $\sampsize$
i.i.d.\ copies of $A$ and $B$ respectively (cf.\ \cref{eq:mc_estimator}), so by
\cref{prop:is_estimator},
$\E[\estimator_{,\sampsize}(\texttokseq_{<t}\texttoken_t)] = \textLMPref(\texttokseq_{<t}\texttoken_t)$
and $\E[\estimator_{,\sampsize}(\texttokseq_{<t})] = \textLMPref(\texttokseq_{<t})$.
 
\emph{Part (i).} Because $\estimator_{,\sampsize}(\texttokseq_{<t}) > 0$ almost
surely, $R \cdot \estimator_{,\sampsize}(\texttokseq_{<t}) = \estimator_{,\sampsize}(\texttokseq_{<t}\texttoken_t)$
identically, and therefore
\begin{align}
\mathrm{Cov}\left(R,\, \estimator_{,\sampsize}(\texttokseq_{<t})\right)
&= \E\!\left[R\,\estimator_{,\sampsize}(\texttokseq_{<t})\right] - \E[R]\,\E\!\left[\estimator_{,\sampsize}(\texttokseq_{<t})\right] \notag \\
&= \E\!\left[\estimator_{,\sampsize}(\texttokseq_{<t}\texttoken_t)\right] - \E[R]\,\textLMPref(\texttokseq_{<t}) \notag \\
&= \textLMPref(\texttokseq_{<t}\texttoken_t) - \E[R]\,\textLMPref(\texttokseq_{<t}). \label{eq:cov-step}
\end{align}
Dividing \cref{eq:cov-step} by $\textLMPref(\texttokseq_{<t}) > 0$ and rearranging,
\begin{equation*}
\E[R] - \underbrace{\frac{\textLMPref(\texttokseq_{<t}\texttoken_t)}{\textLMPref(\texttokseq_{<t})}}_{=\,r}
= -\,\frac{\mathrm{Cov}\left(R,\, \estimator_{,\sampsize}(\texttokseq_{<t})\right)}{\textLMPref(\texttokseq_{<t})},
\end{equation*}
which is \cref{eq:ratio_bias_identity}.
 
\emph{Part (ii).} By the Cauchy--Schwarz inequality,
$\left|\mathrm{Cov}\left(R,\, \estimator_{,\sampsize}(\texttokseq_{<t})\right)\right| \le \sqrt{\var(R)\,\var\left(\estimator_{,\sampsize}(\texttokseq_{<t})\right)}$.
Substituting into \cref{eq:ratio_bias_identity} and dividing by $\sqrt{\var(R)}$
gives the bound in \cref{eq:ratio_bias_bound}; the final equality uses
$\var\left(\estimator_{,\sampsize}(\texttokseq_{<t})\right) = \var(B)/\sampsize$,
as $\estimator_{,\sampsize}(\texttokseq_{<t})$ averages $\sampsize$ i.i.d.\ copies
of $B$. Since the right-hand side of \cref{eq:ratio_bias_bound} is
$O(\sampsize^{-1/2})$, the relative bias vanishes at rate $\sampsize^{-1/2}$. The inequality in \cref{eq:ratio_bias_bound} is the classical relative-bias bound for a
ratio estimator \citep{hartley1954unbiased}.
\end{proof}

\section{Estimator Variance}
\label{app:est_var}

\begin{figure}
    \centering
    \small
    \includegraphics[width=0.65\linewidth]{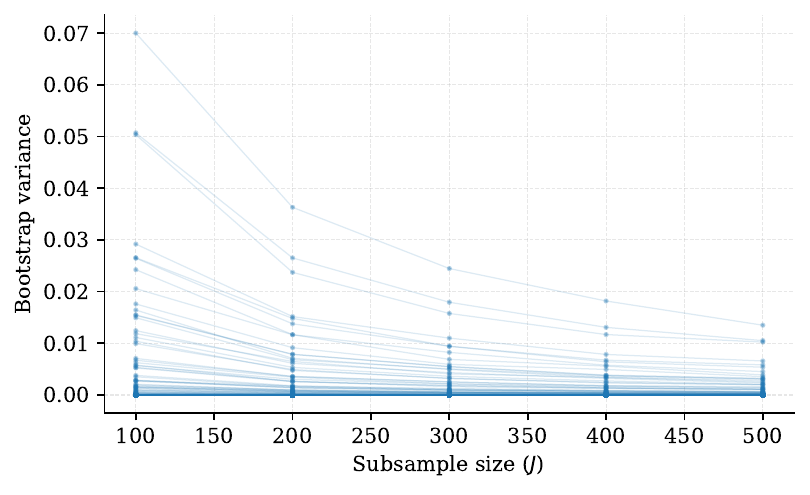}
    \caption{Variance of the IS estimator estimated via nonparametric bootstrap (1000 resamples per subsample size). Target speech LM is Llama-Mimi 1.3B. Each point shows the bootstrap variance for a given word at a given subsample size, and lines connect the same word across sample sizes. The top 2\% of variances are excluded for visualization.}
    \label{fig:var}
\end{figure}

As a diagnostic of the stability of our implementation of the estimator in \cref{eq:inc_is_est}, we analyze its sampling variance. This is akin to asking how much our estimate would change if we drew a different set of samples from CSM. If we are sampling from the full possible acoustic space of an abstracted unit, the specific set of samples should not impact the final estimate,  meaning the variance should be low when we resample. We assess this by repeatedly resampling from our data and measuring how much the estimates change. For our estimator $\estimator_{,\sampsize}$
computed from $\sampsize$ samples, the sampling variance is defined as

\begin{equation}
\var\left(\estimator_{,\sampsize}\right) = \E\!\left[(\estimator_{,\sampsize} - \E\!\left[\estimator_{,\sampsize}\right])^2\right]. 
\end{equation}
Because generating independent Monte Carlo runs is computationally expensive, we obtain a single set of $500$ importance samples per orthographic word in the \citet{winn2016rapid} stimuli. To approximate the sampling variance at different effective sample sizes, we employ nonparametric bootstrap resampling. 

For each word and for each subsample $\subsampsize \in \{100, 200, 300, 400, 500\}$, we draw 1,000 bootstrap samples (with replacement) of size $\subsampsize$ from the original 500 importance weights and recompute the estimator. 
The variance across these bootstrap replicates provides an estimate of $\var\left(\estimator_{,\subsampsize}\right)$.
We find that across orthographic words in our stimuli, our estimator yields lower variance estimates of the word's probability with larger sample sizes (see \cref{fig:var}). That is, as the sample size used to estimate a word's probability increases, the estimation becomes more robust to the specific sample used.

\section{Speech LM Details}
\label{app:models}

Here we give further details on the speech LMs we use as target distributions $\speechLM$ to obtain surprisal values from the speech signal. The generative speech language model (GSLM) released by \citet{lakhotia-etal-2021-generative} is a transformer decoder trained with a context length of 3,072 speech tokens. The speech tokens are drawn from a 200-unit codebook obtained by applying $k$-means clustering to features from layer 6 of the HuBERT feature extractor \citep{hsuetal-2021-hubert}. Due to the temporal resolution of HuBERT, each speech token corresponds to 20 ms of audio sampled at 16 kHz. The context length of 3,072 tokens, therefore, corresponds to 61.44 s of audio. We refer to these tokens as representations for \emph{frames} of speech.

SpiRit-LM is a continuously pretrained, multimodal variant of Llama-2 \citep{nguyen-etal-2025-spirit, touvron2023llama2openfoundation}. The 7B-parameter model was further pretrained on strings of interleaved text and speech tokens, meaning it can take speech as input and generate it. Its speech tokens are also obtained from discretized features of a HuBERT variant, with each speech token corresponding to 40 ms of speech sampled at 16 kHz.

Llama-Mimi trains Llama-3 \citep{llama3} for speech language modeling using representations from Mimi \citep{mimi_tokenizer}, the same neural codec used by our proposal distribution. While Llama-Mimi extends Llama's vocabulary, it does not interleave text and speech during further pretraining. We use both the 1.3B and 8B variants of Llama-Mimi, which are both trained on 240k hours of speech data. All models are trained and evaluated on English speech.

\section{GAMM Data Summary}
\label{ap:gamm_sum}

\begin{table}[h]
\centering
\small
\setlength{\tabcolsep}{6pt}
\begin{tabular}{lcrrr}
\toprule
Condition & Coverage & $N$ obs. & Train & Test \\
\midrule
\multicolumn{5}{l}{\textit{Word-level}} \\
\midrule
GPT-2 & -- & $29{,}150$ & $26{,}235$ & $2{,}915$ \\
GSLM (Unmarg.) & -- & $19{,}418$ & $17{,}476$ & $1{,}942$ \\
GSLM (Marg.) & $959/1{,}229$ (78.0\%) & $19{,}418$ & $17{,}476$ & $1{,}942$ \\
SpiRit (Unmarg.) & -- & $10{,}841$ & $9{,}757$ & $1{,}084$ \\
SpiRit (Marg.) & $809/1{,}229$ (65.8\%) & $10{,}841$ & $9{,}757$ & $1{,}084$ \\
Llama-Mimi (Unmarg.) & -- & $22{,}652$ & $20{,}387$ & $2{,}265$ \\
Llama-Mimi (Marg.) & $1{,}010/1{,}229$ (82.2\%) & $22{,}652$ & $20{,}387$ & $2{,}265$ \\
Llama-Mimi-8B (Unmarg.) & -- & $22{,}515$ & $20{,}264$ & $2{,}252$ \\
Llama-Mimi-8B (Marg.) & $1{,}009/1{,}229$ (82.1\%) & $22{,}515$ & $20{,}264$ & $2{,}252$ \\
\addlinespace
\multicolumn{5}{l}{\textit{Phone-level}} \\
\midrule
GSLM (Unmarg.) & -- & $30{,}763$ & $27{,}687$ & $3{,}076$ \\
SpiRit (Unmarg.) & -- & $29{,}295$ & $26{,}366$ & $2{,}930$ \\
Llama-Mimi (Unmarg.) & -- & $29{,}292$ & $26{,}363$ & $2{,}929$ \\
Llama-Mimi-8B (Unmarg.) & -- & $29{,}292$ & $26{,}363$ & $2{,}929$ \\
\addlinespace
\multicolumn{5}{l}{\textit{Frame-level}} \\
\midrule
GSLM (Unmarg.) & -- & $11{,}571$ & $10{,}414$ & $1{,}157$ \\
SpiRit (Unmarg.) & -- & $22{,}262$ & $20{,}036$ & $2{,}226$ \\
Llama-Mimi (Unmarg.) & -- & $31{,}582$ & $28{,}424$ & $3{,}158$ \\
Llama-Mimi-8B (Unmarg.) & -- & $31{,}582$ & $28{,}424$ & $3{,}158$ \\
\bottomrule
\end{tabular}
\caption{Per-GAMM data summary. \emph{Coverage} is reported only for the marginalized word estimators, and refers to the number of corpus tokens for which the estimator produced an estimate. It is omitted (--) elsewhere, where every token is scored by construction. $N$ obs.\ is the number of pupil samples entering the model, with the mean per-fold train/test split under 10-fold CV (at $\omega=1000$ ms). For each speech LM, $N$ obs.\ also reflects that the word with the largest marginalized surprisal estimate is excluded as an outlier, as the most extreme estimates are most affected by error in importance sampling (see \cref{app:est_var}). For Llama-Mimi, for example, the largest estimate was 484 nats versus 266 nats at the next-largest value. Marg.\ and Unmarg.\ word-level models of a given speech LM are fit on the same folds, so their $N$ and fold sizes are identical.}
\label{tab:data_summary}
\end{table}

\section{Supplementary Results}
\label{app:sup_res}
\onecolumn

\subsection{GPT-2 Results}
\label{app:gpt2}

\begin{figure*}[h]
    \centering
    \includegraphics[width=\linewidth]{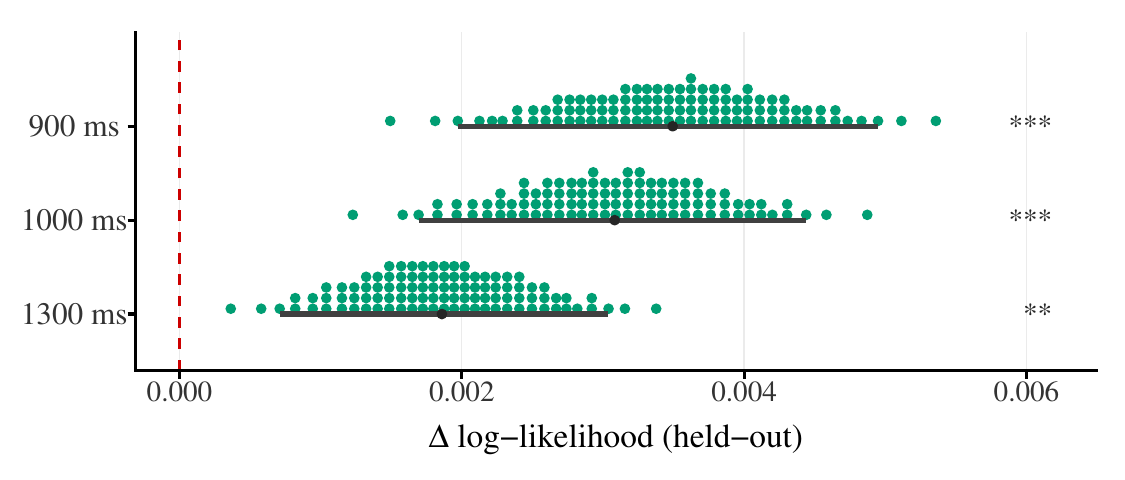}
    \caption{Held-out delta log-likelihood predicting change in pupil size at three offset values $\omega$. Models are fit with word-level surprisal values obtained from GPT-2 using transcriptions of the speech heard by participants.
    Stars correspond to significant improvements over the no-surprisal baseline; corrected $p$-values at $\alpha$ levels of 0.05 (*), 0.01 (**), and 0.001 (***).}
    \label{fig:gpt2}
\end{figure*}

For reference, we also use GPT-2 \citep{radford2019language} to calculate surprisal for each orthographic word in the text transcriptions of our stimuli. We denote tokens from GPT-2 as $\gpttoken \in \gpttokAlphabet$ and calculate surprisal as follows: 

\begin{equation}
    \label{eq:surprisal_gpt}
    \surpgpt\left(\gpttoken_t, \gpttokseq_{<t}\right) = -\log\gptLMPref\left(\gpttoken_t \mid \gpttokseq_{<t}\right).
\end{equation}

\noindent Because the tokenizer of GPT-2 sometimes splits orthographic words across tokens, we sum the surprisals of any multi-token orthographic words. Consistent with other surprisal theory literature, results indicate that adding GPT-2 surprisals to models of pupil size increases the delta log-likelihood significantly over the baseline model (see \cref{fig:gpt2}).

\subsection{Tables of Values}

\begin{table}[h]
\centering
\small
\setlength{\tabcolsep}{6pt}
\begin{tabular}{lccc}
\toprule
Condition & 900 ms & 1000 ms & 1300 ms \\
\midrule
GPT-2 & \makecell{$0.0035$ \\ $[0.0020,\,0.0049]$ \\ ***} & \makecell{$0.0031$ \\ $[0.0017,\,0.0044]$ \\ ***} & \makecell{$0.0019$ \\ $[0.0007,\,0.0030]$ \\ **} \\
\addlinespace
GSLM Marg. & \makecell{$0.0045$ \\ $[0.0031,\,0.0060]$ \\ **} & \makecell{$0.0041$ \\ $[0.0028,\,0.0054]$ \\ **} & \makecell{$0.0021$ \\ $[0.0011,\,0.0030]$ \\ **} \\
\addlinespace
GSLM Unmarg. & \makecell{$-0.0005$ \\ $[-0.0016,\,0.0006]$ \\ n.s.} & \makecell{$-0.0004$ \\ $[-0.0014,\,0.0006]$ \\ n.s.} & \makecell{$-0.0003$ \\ $[-0.0010,\,0.0004]$ \\ n.s.} \\
\addlinespace
SpiRit Marg. & \makecell{$0.0024$ \\ $[0.0009,\,0.0038]$ \\ **} & \makecell{$0.0018$ \\ $[0.0006,\,0.0030]$ \\ *} & \makecell{$0.0015$ \\ $[0.0005,\,0.0025]$ \\ **} \\
\addlinespace
SpiRit Unmarg. & \makecell{$-0.0034$ \\ $[-0.0050,\,-0.0019]$ \\ n.s.} & \makecell{$-0.0031$ \\ $[-0.0044,\,-0.0018]$ \\ n.s.} & \makecell{$-0.0029$ \\ $[-0.0040,\,-0.0018]$ \\ n.s.} \\
\addlinespace
Llama-Mimi Marg. & \makecell{$0.0021$ \\ $[0.0007,\,0.0035]$ \\ **} & \makecell{$0.0018$ \\ $[0.0004,\,0.0031]$ \\ *} & \makecell{$0.0014$ \\ $[0.0003,\,0.0025]$ \\ *} \\
\addlinespace
Llama-Mimi Unmarg. & \makecell{$0.0006$ \\ $[-0.0001,\,0.0013]$ \\ n.s.} & \makecell{$0.0006$ \\ $[-0.0001,\,0.0012]$ \\ n.s.} & \makecell{$0.0001$ \\ $[-0.0004,\,0.0005]$ \\ n.s.} \\
\addlinespace
Llama-Mimi-8B Marg. & \makecell{$0.0032$ \\ $[0.0019,\,0.0045]$ \\ **} & \makecell{$0.0029$ \\ $[0.0016,\,0.0041]$ \\ **} & \makecell{$0.0024$ \\ $[0.0014,\,0.0035]$ \\ **} \\
\addlinespace
Llama-Mimi-8B Unmarg. & \makecell{$0.0010$ \\ $[0.0003,\,0.0017]$ \\ *} & \makecell{$0.0009$ \\ $[0.0003,\,0.0016]$ \\ *} & \makecell{$0.0006$ \\ $[0.0001,\,0.0010]$ \\ *} \\
\addlinespace
\bottomrule
\end{tabular}
\caption{Mean and 95\% confidence intervals for delta log-likelihood predicting change in pupil size at three offset values $\omega$. Models are fit with word-level surprisal values obtained from four speech LMs and GPT-2. 
Stars correspond to significant improvements over the no-surprisal baseline; corrected $p$-values at $\alpha$ levels of 0.05 (*), 0.01 (**), and 0.001 (***).}\label{tab:delta_ll_word}
\end{table}

\begin{table}[h]
\centering
\small
\setlength{\tabcolsep}{6pt}
\begin{tabular}{lccc}
\toprule
Condition & 900 ms & 1000 ms & 1300 ms \\
\midrule
GSLM Unmarg. & \makecell{$0.0018$ \\ $[0.0011,\,0.0025]$ \\ **} & \makecell{$0.0016$ \\ $[0.0010,\,0.0023]$ \\ **} & \makecell{$0.0011$ \\ $[0.0007,\,0.0016]$ \\ **} \\
\addlinespace
SpiRit Unmarg. & \makecell{$0.0025$ \\ $[0.0017,\,0.0033]$ \\ **} & \makecell{$0.0024$ \\ $[0.0016,\,0.0031]$ \\ **} & \makecell{$0.0017$ \\ $[0.0010,\,0.0023]$ \\ **} \\
\addlinespace
Llama-Mimi Unmarg. & \makecell{$0.0021$ \\ $[0.0013,\,0.0030]$ \\ **} & \makecell{$0.0018$ \\ $[0.0011,\,0.0026]$ \\ **} & \makecell{$0.0012$ \\ $[0.0006,\,0.0018]$ \\ **} \\
\addlinespace
Llama-Mimi-8B Unmarg. & \makecell{$0.0023$ \\ $[0.0014,\,0.0032]$ \\ **} & \makecell{$0.0020$ \\ $[0.0012,\,0.0028]$ \\ **} & \makecell{$0.0012$ \\ $[0.0006,\,0.0018]$ \\ **} \\
\addlinespace
\bottomrule
\end{tabular}
\caption{Mean and 95\% confidence intervals for delta log-likelihood predicting change in pupil size at three offset values $\omega$. Models are fit with phone-level surprisal values obtained from four speech LMs. 
Stars correspond to significant improvements over the no-surprisal baseline; corrected $p$-values at $\alpha$ levels of 0.05 (*), 0.01 (**), and 0.001 (***).}\label{tab:delta_ll_phone}
\end{table}

\begin{table}[h]
\centering
\small
\setlength{\tabcolsep}{6pt}
\begin{tabular}{lccc}
\toprule
Condition & 900 ms & 1000 ms & 1300 ms \\
\midrule
GSLM Unmarg. & \makecell{$0.0019$ \\ $[0.0007,\,0.0031]$ \\ **} & \makecell{$0.0009$ \\ $[0.0001,\,0.0017]$ \\ n.s.} & \makecell{$0.0010$ \\ $[0.0001,\,0.0018]$ \\ n.s.} \\
\addlinespace
SpiRit Unmarg. & \makecell{$0.0027$ \\ $[0.0018,\,0.0037]$ \\ **} & \makecell{$0.0027$ \\ $[0.0017,\,0.0036]$ \\ **} & \makecell{$0.0014$ \\ $[0.0007,\,0.0021]$ \\ **} \\
\addlinespace
Llama-Mimi Unmarg. & \makecell{$0.0035$ \\ $[0.0025,\,0.0044]$ \\ **} & \makecell{$0.0030$ \\ $[0.0021,\,0.0039]$ \\ **} & \makecell{$0.0018$ \\ $[0.0010,\,0.0025]$ \\ **} \\
\addlinespace
Llama-Mimi-8B Unmarg. & \makecell{$0.0023$ \\ $[0.0015,\,0.0032]$ \\ **} & \makecell{$0.0020$ \\ $[0.0011,\,0.0028]$ \\ **} & \makecell{$0.0011$ \\ $[0.0004,\,0.0018]$ \\ *} \\
\addlinespace
\bottomrule
\end{tabular}
\caption{Mean and 95\% confidence intervals for delta log-likelihood predicting change in pupil size at three offset values $\omega$. Models are fit with frame-level surprisal values obtained from four speech LMs.
Stars correspond to significant improvements over the no-surprisal baseline; corrected $p$-values at $\alpha$ levels of 0.05 (*), 0.01 (**), and 0.001 (***).}\label{tab:delta_ll_frame}
\end{table}

\subsection{Marginalized and Unmarginalized Comparisons}

\begin{table}[h]
\centering
\small
\begin{tabular}{lccc}
\toprule
Contrast & 900 ms & 1000 ms & 1300 ms \\
\midrule
GSLM & \makecell{$0.0050$ \\ $[0.0032,\,0.0068]$ \\ $p<0.001$} & \makecell{$0.0045$ \\ $[0.0029,\,0.0061]$ \\ $p<0.001$} & \makecell{$0.0024$ \\ $[0.0011,\,0.0036]$ \\ $p<0.001$} \\
\addlinespace
Llama-Mimi & \makecell{$0.0015$ \\ $[0.0001,\,0.0030]$ \\ $p=0.019$} & \makecell{$0.0012$ \\ $[-0.0002,\,0.0026]$ \\ $p=0.044$} & \makecell{$0.0014$ \\ $[0.0003,\,0.0025]$ \\ $p=0.007$} \\
\addlinespace
Llama-Mimi-8B & \makecell{$0.0022$ \\ $[0.0008,\,0.0035]$ \\ $p=0.001$} & \makecell{$0.0019$ \\ $[0.0006,\,0.0032]$ \\ $p=0.002$} & \makecell{$0.0019$ \\ $[0.0008,\,0.0029]$ \\ $p<0.001$} \\
\addlinespace
SpiRit & \makecell{$0.0058$ \\ $[0.0038,\,0.0078]$ \\ $p<0.001$} & \makecell{$0.0049$ \\ $[0.0032,\,0.0066]$ \\ $p<0.001$} & \makecell{$0.0044$ \\ $[0.0031,\,0.0058]$ \\ $p<0.001$} \\
\addlinespace
\bottomrule
\end{tabular}
\caption{Paired-bootstrap tests for differences between marginalized and unmarginalized word surprisal GAMMs by model and time offset. Values represent the average difference in delta log-likelihood between GAMMs fit with marginalized and unmarginalized word surprisals derived from the same speech LMs. Positive values indicate higher delta log-likelihoods when fit with marginalized surprisals. Confidence intervals are 95\% bootstrap intervals.}
\label{tab:marg_vs_unmarg}
\end{table}

\subsection{Pupil Time Baseline}
\label{ap:pup_time}

\begin{figure*}[h]
    \centering
    \includegraphics[width=\linewidth]{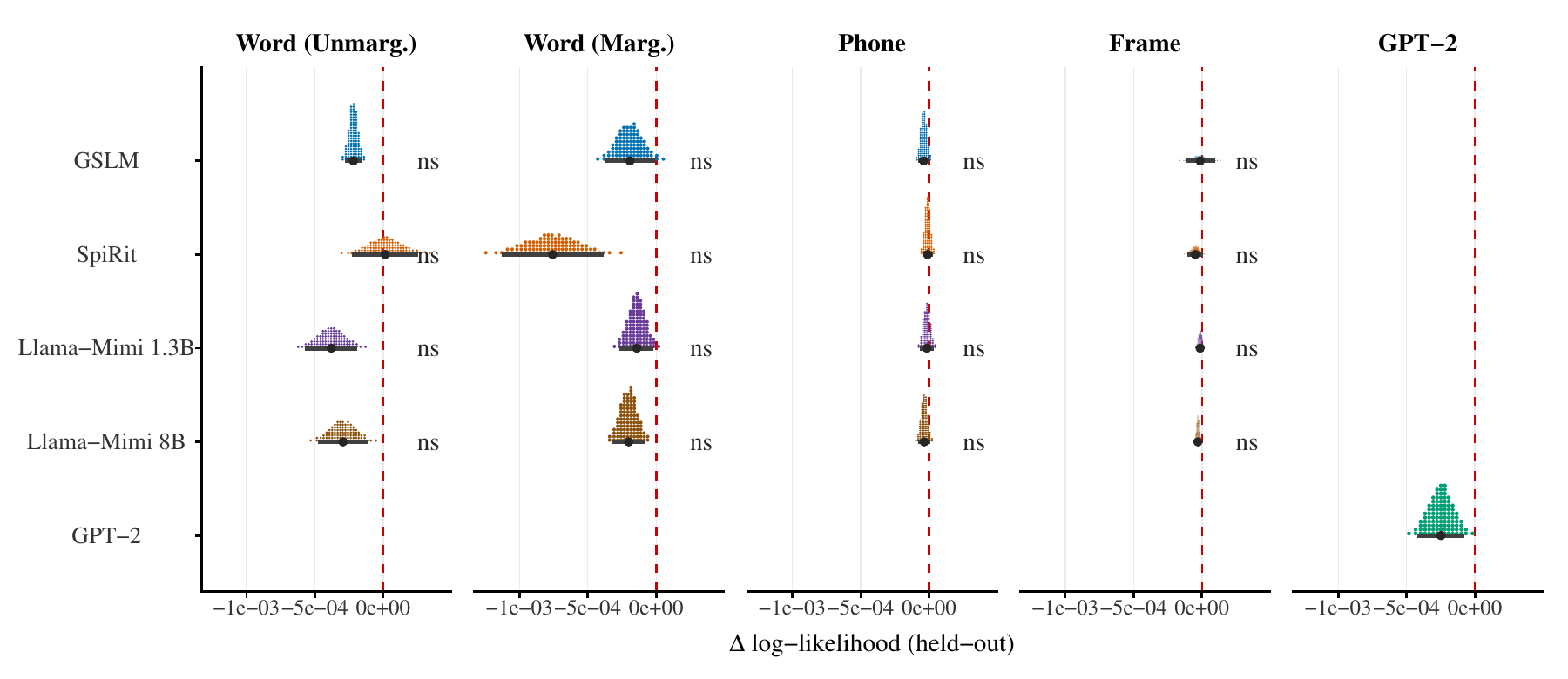}
    \caption{Held-out delta log-likelihood relative to baseline model of pupil size, augmented with a smooth for the time in the trial at which the pupil measurement was taken (offset: $\omega=1000$ ms).
    Each dot represents 1\% of the bootstrap distribution of the mean delta log-likelihood; the point and bars below give the bootstrap mean and 95\% CI.}
    \label{fig:full_dll_timesmooth}
\end{figure*}

\end{document}